\documentclass[letterpaper]{article} 
\usepackage{aaai25}  
\usepackage{times}  
\usepackage{helvet}  
\usepackage{courier}  
\usepackage[hyphens]{url}  
\usepackage{graphicx} 
\usepackage{natbib}  
\usepackage{caption} 
\usepackage{algorithm}
\usepackage{algorithmic}
\usepackage{amsmath}
\usepackage{amssymb}
\usepackage{amsthm}
\usepackage{booktabs}
\usepackage{tikz}
\usetikzlibrary{positioning, arrows.meta, shapes.geometric, patterns, calc, fit, backgrounds, decorations.pathreplacing}
\usepackage{multirow}

\newtheorem{definition}{Definition}
\newtheorem{proposition}{Proposition}
\newtheorem{theorem}{Theorem}
\newtheorem{principle}{Principle}
\DeclareMathOperator*{\argmin}{argmin}

\title{Do Fair Models Reason Fairly? Counterfactual Explanation Consistency for Procedural Fairness in Credit Decisions}
\author{
    Gideon Popoola$^{1}$ and John Sheppard$^{1}$
}
\affiliations{
    \textsuperscript{\rm 1}Gianforte School of Computing

    Montana State University\\
    Bozeman, MT, USA, 59715\\
    gideon.popoola@student.montana.edu, john.sheppard@montana.edu
}

\begin{document}

\maketitle

\begin{abstract}
Machine learning algorithms in socially sensitive domains (e.g., credit decisions) often focus on equalizing predictive outcomes. However, satisfying these metrics does not guarantee that models use the same reasoning for different groups. We show that existing outcome-fair models can still apply fundamentally different reasoning to individuals, a ``hidden procedural bias'' missed by standard fairness metrics and algorithms. We propose Counterfactual Explanation Consistency (CEC), a framework that detects and mitigates this bias by aligning feature attributions between individuals and their counterfactual counterparts. Key contributions include a nearest-neighbor counterfactual generation method, a modified baseline for integrated gradient comparisons, an individual-level procedural fairness metric, and a corresponding training loss. We introduce a taxonomy identifying ``Regime B'' (same outcome, different reasoning) as a critical blind spot. Experiments on synthetic data, German Credit, Adult Income, and HMDA mortgage data demonstrate that outcome-fair baselines exhibit substantial hidden bias, while CEC substantially reduces it with modest utility cost.

\end{abstract}

\section{Introduction}
Machine learning (ML) models are now widely deployed in socially sensitive domains such as financial services for credit scoring, loan underwriting, and risk assessment \cite{bello2023machine, dastile2020statistical}. Regulatory frameworks such as the Equal Credit Opportunity Act (ECOA) and the Fair Housing Act prohibit lenders from disadvantaging individuals based on protected attributes, including race, gender, and ethnicity \cite{act2018equal, act1968fair}. Ensuring that ML models comply with these requirements has become a central challenge for practitioners and regulators alike.

Most algorithmic fairness research addresses this challenge through \emph{outcome fairness}, which focuses on comparing predictive outcomes across groups using criteria such as demographic parity or equalized odds \cite{hardt2016equality, feldman2015certifying}. While effective at reducing outcome disparities, these metrics leave a critical question unanswered, which is \textit{does the model rely on the same reasoning when evaluating individuals from different groups?}

This question has deep roots in theories of procedural justice. In the philosophical and legal traditions, fairness requires not only equitable outcomes but also consistent application of decision criteria \cite{rawls1971theory, leventhal1980should, thibaut1973procedural}. The principle that ``like cases should be treated alike'' demands that individuals with comparable qualifications be evaluated using the same standards, regardless of demographic membership. In lending regulation, ECOA explicitly prohibits \emph{disparate treatment} even when outcomes appear equal \cite{act2018equal}. A lender who evaluates Male applicants primarily on credit score but evaluates Female applicants primarily on employment history has engaged in prohibited conduct, regardless of whether approval rates are balanced. This legal and ethical framework motivates our focus on procedural fairness in automated decision systems.

Figure \ref{fig:motivation} illustrates the core problem. Two loan applicants with nearly identical financial profiles but different protected groups both receive approval, satisfying outcome fairness. However, examining the model's procedure reveals different reasoning. Applicant A's decision is driven by credit score and income, while Applicant B's decision depends primarily on employment history and collateral. This \emph{hidden procedural bias} (different reasoning pathways producing the same outcome) is invisible to most traditional fairness metrics yet constitutes disparate treatment under ECOA.

\begin{figure}[t]
\centering
\begin{tikzpicture}[
    font=\footnotesize,
    barA/.style={fill=blue!60, draw=blue!80},
    barB/.style={fill=red!50, draw=red!70},
    >=Stealth
]

\node[anchor=north west, font=\footnotesize\bfseries] at (0, 3.6) {Applicant A (Group 0)};
\node[anchor=north west, font=\scriptsize] at (0, 3.2) {Income=\$60K, Score=720, Approved};

\foreach \feat/\val/\ypos in {Credit Score/0.55/2.5, Income/0.28/2.0, Empl. History/0.10/1.5, Collateral/0.05/1.0, Other/0.02/0.5} {
    \draw[barA] (0, \ypos) rectangle (\val*5.5, \ypos+0.35);
    \node[anchor=east, font=\scriptsize] at (-0.05, \ypos+0.175) {\feat};
    \node[anchor=west, font=\scriptsize] at (\val*5.5+0.05, \ypos+0.175) {\val};
}

\node[anchor=north west, font=\footnotesize\bfseries] at (0, -0.2) {Applicant B (Group 1)};
\node[anchor=north west, font=\scriptsize] at (0, -0.6) {Income=\$62K, Score=725, Approved};

\foreach \feat/\val/\ypos in {Credit Score/0.08/-1.3, Income/0.02/-1.8, Empl. History/0.48/-2.3, Collateral/0.38/-2.8, Other/0.04/-3.3} {
    \draw[barB] (0, \ypos) rectangle (\val*5.5, \ypos+0.35);
    \node[anchor=east, font=\scriptsize] at (-0.05, \ypos+0.175) {\feat};
    \node[anchor=west, font=\scriptsize] at (\val*5.5+0.05, \ypos+0.175) {\val};
}

\node[draw=green!60!black, fill=green!10, rounded corners=2pt, font=\scriptsize, text width=2.2cm, align=center] at (4.5, 3.55) {Outcome Fair \checkmark\\Same decision};

\node[draw=red!70, fill=red!10, rounded corners=2pt, font=\scriptsize, text width=2.2cm, align=center] at (4.5, -0.12) {Procedurally Unfair $\times$\\Different reasoning};

\end{tikzpicture}
\caption{Two financially similar applicants from different demographic groups receive the same loan approval (outcome-fair), but the model's procedure reveals entirely different reasoning (procedurally unfair). This \emph{Regime B} bias is invisible to standard fairness metrics.}
\label{fig:motivation}
\end{figure}

We formalize this phenomenon through a taxonomy of four fairness regimes based on whether a model's predictions and explanations remain consistent under counterfactual changes to the protected attribute (Table \ref{tab:taxonomy}). For an individual $(x, y, a)$ with counterfactual $\tilde{x}$ from the opposite group, we assess two properties: \emph{prediction consistency} (does the classification remain the same?) and \emph{explanation consistency} (does the model's reasoning remain the same?). The cross-product yields four regimes. Existing fairness metrics target Regimes C and D, where outcome disparities are observable. However, Regime B, in which predictions agree but reasoning differ is equally problematic from both legal and ethical standpoints, yet remains invisible to all standard metrics.

\begin{table}[t]
\centering
\small
\caption{Taxonomy of fairness regimes. CEC targets Regime~B, which is invisible to outcome-based methods.}
\label{tab:taxonomy}
\begin{tabular}{@{}lcc@{}}
\toprule
\textbf{Regime} & \textbf{Same Prediction?} & \textbf{Same Reasoning?} \\
\midrule
A (Fully Fair) & \checkmark & \checkmark \\
B (Hidden Bias) & \checkmark & $\times$ \\
C (Outcome Unfair) & $\times$ & \checkmark \\
D (Severely Unfair) & $\times$ & $\times$ \\
\bottomrule
\end{tabular}
\end{table}

To address this gap, we propose \emph{Counterfactual Explanation Consistency} (CEC), a framework for detecting and mitigating hidden procedural bias. The core idea is to compare integrated gradient attribution vectors between a factual individual and a counterfactual counterpart. If the model is procedurally fair, these attribution vectors should be similar. The model should weigh credit score, income, and other financial features in the same proportions regardless of which group the applicant belongs to. When they diverge, CEC quantifies the degree of hidden bias and provides a differentiable training signal to mitigate it. In this paper, we make the following contributions:
\begin{itemize}
\item We formalize hidden procedural bias and introduce a 2×2 taxonomy distinguishing outcome vs. procedural fairness violations.
\item We propose a nearest-neighbor counterfactual generation method that produces realistic matches by controlling for financial merit and creditworthiness outcome, without requiring causal graphs or structural equations.
\item  We propose \emph{Counterfactual Explanation Consistency (CEC)}, a metric that measures explanation stability under demographic counterfactuals, along with a differentiable training loss that jointly optimizes accuracy, outcome fairness, and explanation consistency.
\item We conduct a comprehensive evaluation across synthetic, benchmark, and real-world lending datasets demonstrating that outcome-fair baselines contain substantial hidden bias, which CEC substantially reduces with minimal utility cost.
\end{itemize}

\section{Related Work}

\subsection{Outcome Fairness in Machine Learning}

The algorithmic fairness literature has developed numerous statistical criteria for evaluating predictive equity. \emph{Demographic parity} requires equal positive prediction rates across groups \cite{feldman2015certifying}; \emph{equalized odds} requires equal true positive and false positive rates \cite{hardt2016equality}; and \emph{calibration} requires that predicted probabilities reflect true outcomes within each group \cite{chouldechova2017fair}. These criteria are known to be mutually incompatible in general \cite{kleinberg2016inherent}, leading to a rich literature on navigating trade-offs.

Mitigating bias spans the machine learning pipeline. Pre-processing methods transform training data to remove correlations with protected attributes \cite{feldman2015certifying, kamiran2012data, popoola2024investigating}. In-processing methods incorporate fairness constraints during training through reductions \cite{agarwal2018reductions}, adversarial objectives \cite{zhang2018mitigating}, or constrained optimization \cite{cotter2019optimization}. Post-processing methods adjust decision thresholds after training \cite{hardt2016equality}. While these approaches effectively reduce outcome disparities, they evaluate only \emph{what} a model predicts, not \emph{how} it arrives at that prediction. Our work shows that this gap allows models to harbor hidden procedural bias even when all outcome constraints are satisfied.

\subsection{Explainability and Fairness}
Post-hoc explanation methods such as  Local Interpretable Model-Agnostic Explanations (LIME) \cite{ribeiro2016should}, SHapley Additive exPlanations (SHAP) \cite{lundberg2017unified}, and integrated gradients (IG) \cite{sundararajan2017axiomatic} have become essential tools for understanding model behavior in high-stakes domains. A growing body of work connects explainability with fairness. \citeauthor{dai2022fairness} (\citeyear{dai2022fairness}) examine whether explanation quality differs across demographic groups, finding that models may provide less informative explanations for minority groups. \citeauthor{begley2020explainability} (\citeyear{begley2020explainability}) propose using feature importance to audit models for discriminatory patterns. \citeauthor{agarwal2022openxai} (\citeyear{agarwal2022openxai}) provide benchmarks for evaluating explanation methods, and \citeauthor{slack2020fooling} (\citeyear{slack2020fooling}) show that explanations can be manipulated to hide bias.

\subsection{Procedural Fairness}

The concept of procedural fairness originates in social psychology and legal theory. \citeauthor{leventhal1980should} (\citeyear{leventhal1980should}) identified \emph{consistency} as a core component of procedural justice in which the same decision rules should apply across persons and across time. \citeauthor{grgic2018beyond} (\citeyear{grgic2018beyond}) study which features humans consider fair to use in algorithmic decisions, finding strong preferences for process-based criteria. In the machine learning context, procedural fairness has received less attention than outcome fairness, though recent work has begun to formalize process-based notions \cite{zhao2023fairness, germino2025explanation}. \citeauthor{dwork2012fairness} (\citeyear{dwork2012fairness}) proposed \emph{individual fairness} (similar individuals should receive similar outcomes), which captures a related but distinct intuition. Our method extends this notion by requiring not just similar outcomes but similar \emph{reasoning processes}.

\subsection{Counterfactual Fairness and Reasoning}

Counterfactual reasoning provides a natural framework for individual-level fairness analysis. \citeauthor{kusner2017counterfactual} (\citeyear{kusner2017counterfactual}) formalize counterfactual fairness using structural causal models (SCMs), requiring that predictions remain unchanged under interventions on protected attributes. Extensions address causal pathway constraints \cite{wu2019counterfactual, chiappa2019path} and relaxations for approximate fairness. 
However, building such graphs accurately is difficult in practice.

Counterfactual \emph{explanation} methods such as DiCE \cite{mothilal2020explaining}, FACE \cite{poyiadzi2020face}, and CARLA \cite{pawelczyk2021carla} generate alternative inputs that would change a prediction, focusing on \emph{recourse} (how can an individual obtain a different outcome?) rather than fairness auditing. We use counterfactual reasoning to assess whether \emph{explanations} remain consistent across demographic groups, and we achieve this without requiring causal graphs through merit-based nearest-neighbor matching.

\subsection{Multi-Objective Fair Learning}
Fair classification can be regarded as multi-objective, involving potential trade-offs between accuracy and one or more fairness constraints \cite{wang2024generating, cotter2019optimization}. Existing methods typically balance two objectives, which are predictive performance and outcome fairness \cite{wei2022fairness, nagpal2025optimizing}. Our training objective extends this to a three-dimensional trade-off by introducing explanation consistency as an additional objective, demonstrating that procedural fairness can be achieved jointly with outcome fairness and accuracy at modest cost.

\section{Methodology}
\subsection{Notation and Problem Setup}
We consider binary classification in a high-stakes, finance-based decision domain such as credit lending. Let $\mathcal{X} \subseteq \mathbb{R}^d$ denote the feature space, $\mathcal{Y} = \{0,1\}$ the label space (e.g., $y=1$ for loan approval), and $a \in \mathcal{A} = \{0,1\}$ a binary protected attribute (e.g., race or gender). Given training data $\mathcal{D} = \{(x_i, y_i, a_i)\}_{i=1}^n$, we learn a scoring function $f_\theta: \mathcal{X} \to \mathbb{R}$ parameterized by $\theta$, with predicted label $\hat{y} = \mathbb{I}[f_\theta(x) \geq \tau]$ for some threshold $\tau$.

A key part of our framework is the distinction between financial and non-financial features.

\begin{definition}[Financial Feature Set]
\label{def:financial_features}
Let $\mathcal{F} \subseteq \{1, \ldots, d\}$ denote the indices of \emph{financial features}. A feature can be regarded as a financial feature iff it satisfies three criteria:
\begin{enumerate}
    \item  \textbf{Merit-based}: they reflect creditworthiness or repayment ability (e.g., income, credit score, debt-to-income ratio),
    \item \textbf{Legally permissible}: they are not prohibited by fair lending regulation, and
    \item  \textbf{Not demographic proxies}: they are not strong proxies for protected attributes (e.g., residential zip code is excluded due to correlation with race from historical redlining).
\end{enumerate}
\end{definition}
In practice, $\mathcal{F}$ is determined through domain expertise and regulatory guidance. For credit lending, typical members include income, credit score, length of credit history, debt-to-income ratio, employment length, and liquid assets. Features typically excluded include residential address, educational institution, and certain occupation categories that may be demographically imbalanced.

Our method, CEC, ensures that $f_\theta$ uses consistent reasoning across demographic groups for individuals who are comparable on $\mathcal{F}$. The framework consists of three components:
1) counterfactual generation, 2) consistent baseline selection, and 3) the CEC metric with its training loss.

\subsection{Counterfactual Generation}

\subsubsection{The Problem with Naive Counterfactuals.}
A naive approach to generating counterfactuals simply flips the protected attribute while holding all other features constant. Given individual $x$ with protected attribute $a$, the na\"{i}ve counterfactual is $\tilde{x}_{\text{na\"{i}ve}} = (x_1, \ldots, x_{j-1}, 1{-}a, x_{j+1}, \ldots, x_d)$ where $j$ indexes the protected attribute. This approach suffers from a critical flaw, which is that many features are correlated with protected attributes due to historical and structural factors. Residential zip code correlates with race due to segregation, certain employment sectors are gender-imbalanced, and educational backgrounds track socioeconomic status, both of which are regarded as non-demographic proxies. Flipping race while holding zip code constant produces an individual who may not exist statistically, meaning the combination is rarely or never observed in the real population.

\subsubsection{Label-Stratified Nearest-Neighbor Matching.}
We propose a novel approach that avoids causal modeling while producing realistic and interpretable counterfactuals. The key insight is that for fairness auditing in lending, we need only identify individuals from different demographic groups who are comparable in terms of financial features that \emph{should} legitimately determine creditworthiness.

We approximate counterfactuals using matched opposite-group instances with similar financial characteristics. This follows standard practice in counterfactual fairness auditing when structural causal models are unavailable \cite{schwab2018perfect}. We do not claim causal counterfactual validity. Rather, we use matched proxies to approximate demographic interventions while preserving plausibility in financial features.

\begin{definition}[Counterfactual Matching]
\label{def:counterfactual}
Given individual $(x, y, a)$, financial feature set $\mathcal{F}$, and training data $\mathcal{D}$, a counterfactual is:
\[
    \tilde{x} = x_{i^*}, \;\; i^* = \argmin_{i:\, a_i = 1{-}a,\, y_i = y} \|x_\mathcal{F}^{\text{std}} - (x_i)_\mathcal{F}^{\text{std}}\|_2
\]
where $x_\mathcal{F}^{\text{std}}$ denotes the standardized subvector of $x$ restricted to $\mathcal{F}$.
\end{definition}

Stratifying on the true label $y$ is an important design choice because it ensures that the factual and counterfactual have identical ground-truth creditworthiness, so any measured explanation difference reflects group-dependent reasoning rather than legitimate risk differences. We note that observed labels may not perfectly capture ground-truth creditworthiness due to selective labeling (only approved applicants have observed repayment outcomes). This is a common limitation shared by all financial modeling, especially in credit decision-making. The counterfactual $\tilde{x}$ is always a \emph{real individual} from the training data, not a synthetic example, ensuring distributional plausibility. Algorithm \ref{alg:cf_generation} and Figure \ref{fig:phase1} show this procedure with quality safeguards.

\begin{algorithm}[t]
\caption{Label-Stratified Counterfactual Generation}
\label{alg:cf_generation}
\small
\begin{algorithmic}[1]
\REQUIRE Training data $\mathcal{D} = \{(x_i, y_i, a_i)\}_{i=1}^n$, financial features $\mathcal{F}$, distance threshold $\tau$ 
\ENSURE Counterfactual map $\mathcal{C}: i \mapsto \tilde{x}_i$

\STATE \textbf{// Preprocessing (one-time)}
\STATE Compute standardization: $\mu_\mathcal{F}, \sigma_\mathcal{F}$ from training data
\FOR{$(y, a) \in \{0,1\} \times \{0,1\}$}
    \STATE $\mathcal{D}_{y,a} \gets \{(x_i, i) : y_i = y, \, a_i = a\}$
    \IF{$|\mathcal{D}_{y,a}| > 0$}
        \STATE Standardize: $X_{y,a}^\mathcal{F} \gets \{(x_i)_\mathcal{F}^{\text{std}}\}$
        \STATE Build KD-tree index $\mathcal{I}_{y,a}$ on $X_{y,a}^\mathcal{F}$
    \ENDIF
\ENDFOR
\STATE \textbf{// Query (for each instance)}
\FOR{$i = 1, \ldots, n$}
    \STATE $\bar{a} \gets 1 - a_i$ \COMMENT{Opposite group}
    \IF{$|\mathcal{D}_{y_i, \bar{a}}| = 0$}
        \STATE Mark $i$ as unmatched; \textbf{continue}
    \ENDIF
    \STATE $(j^*, d^*) \gets \mathcal{I}_{y_i, \bar{a}}.\text{query}((x_i)_\mathcal{F}^{\text{std}})$
    \IF{$\tau > 0$ \AND $d^* > \tau$}
        \STATE Mark $i$ as unmatched \COMMENT{Poor quality}
    \ELSE
        \STATE $\mathcal{C}[i] \gets x_{j^*}$ \COMMENT{Full feature vector}
    \ENDIF
\ENDFOR
\end{algorithmic}
\end{algorithm}

The algorithm operates in two phases. The \textbf{Preprocessing} phase partitions the training data into four subsets by $(y, a)$, standardizes financial features using training-set statistics to ensure scale-invariant distance computation, and builds KD-tree indices for each non-empty partition. The complexity is $O(n \log n)$ per subset. \textbf{Querying} finds the nearest neighbor in the opposite-group, same-label partition for each instance, with $O(|\mathcal{F}| \log n)$ cost per query. The additional distance threshold $\tau$ rejects poor-quality matches when no suitable counterpart exists. 

\begin{figure}[t]
\centering
\begin{tikzpicture}[
    font=\small,
    box/.style={draw, rounded corners=3pt, minimum height=0.85cm, text width=2.6cm, align=center, font=\footnotesize},
    smallbox/.style={draw, rounded corners=2pt, minimum height=0.65cm, text width=2.1cm, align=center, font=\scriptsize},
    arrow/.style={-Stealth, thick},
    >=Stealth
]
 
\node[font=\footnotesize\bfseries, text=blue!70!black] at (0.0, 3.5) {Labeled Stratified NN Matching};
 
\node[box, fill=blue!8] (data) at (0, 2.5) {Training Data\\$\{(x_i, y_i, a_i)\}$};
 
\node[box, fill=blue!12] (partition) at (3.6, 2.5) {Partition by\\$(y, a) \in \{0,1\}$};
 
\node[smallbox, fill=green!10] (kdtree) at (1.8, 1.0) {Build KD-tree\\$\mathcal{I}_{y,a}$ on $\mathcal{F}$};
\node[smallbox, fill=orange!10] (baseline) at (5.2, 1.0) {Label-Group\\Baselines $b_{y,a}$};
 
\node[smallbox, fill=yellow!15] (match) at (1.8, -0.4) {NN Matching\\$\tilde{x}_i \gets$ query $\mathcal{I}_{y_i, 1-a_i}$};
 
\node[box, fill=yellow!25, text width=3.5cm] (output) at (3.5, -1.8) {Paired Dataset\\$\{(x_i, \tilde{x}_i, y_i, a_i, b_{y_i,a_i})\}_{i=1}^n$};
 
\draw[arrow] (data) -- (partition);
\draw[arrow] (partition) -- (kdtree) node[midway, left, font=\scriptsize] {subsets};
\draw[arrow] (partition) -- (baseline);
\draw[arrow] (kdtree) -- (match);
\draw[arrow] (match) -- (output);
\draw[arrow] (baseline) -- (output);
 
\node[font=\scriptsize, text=gray, text width=2.4cm, align=center] at (4.9, -0.3) {label\\match};
 
\end{tikzpicture}
\caption{Training data is partitioned by label and group. KD-tree indices enable efficient nearest-neighbor matching on financial features $\mathcal{F}$, producing paired data with label-group baselines for training.}
\label{fig:phase1}
\end{figure}

\subsection{The Consistent Baseline Principle}
 
\subsubsection{Integrated Gradients Background.}
 Integrated gradients (IG) \cite{sundararajan2017axiomatic} compute feature attributions by accumulating gradients along a straight-line path from a baseline $b$ to the input $x$:
\[
    \text{IG}_j(x; b) = (x_j - b_j) \int_0^1 \frac{\partial f}{\partial x_j}\bigg|_{b + \alpha(x-b)} d\alpha
\]
IG satisfies two desirable axioms: \emph{completeness} (attributions sum to $f(x) - f(b)$) and \emph{sensitivity} (if changing feature $j$ changes the prediction, it receives a nonzero attribution). The choice of baseline $b$ determines the reference point from which feature importance is measured.

\subsubsection{The Confound of Group-Specific Baselines.}
 
When comparing explanations across groups for a factual--counterfactual pair $(x, \tilde{x})$, natural choices are group-specific baselines $b_0$ for Group~0 and $b_1$ for Group~1. However, this confounds two distinct effects.
Consider an example where we suppose $x$ has income \$60K and belongs to Group~0 (average income $b_0^{\text{inc}} = \$50\text{K}$), while $\tilde{x}$ has income \$62K and belongs to Group~1 (average income $b_1^{\text{inc}} = \$55\text{K}$). Even if the model weighs income identically for both groups, the attributions differ: $\text{IG}_{\text{inc}}(x; b_0) \propto (60{-}50) = 10$ versus $\text{IG}_{\text{inc}}(\tilde{x}; b_1) \propto (62{-}55) = 7$. The difference of $3$ arises entirely from the different baselines, not from discriminatory reasoning, making. This makes the comparison invalid as a fairness measure.
 
\begin{principle}[Consistent Baseline]
\label{princ:baseline}
For a factual--counterfactual pair $(x, \tilde{x})$ where $x$ has label $y$ and attribute $a$, both attributions must use the same baseline:
\[
    \text{IG}(x;\, b_{y,a}) \quad \text{and} \quad \text{IG}(\tilde{x};\, b_{y,a})
\]
where $b_{y,a} = \frac{1}{|\{i: y_i=y,\, a_i=a\}|}\sum_{i: y_i=y,\, a_i=a} x_i$ is the label-group mean for the factual's label and group.
\end{principle}
 
Using the factual's group baseline, the example becomes: $\text{IG}_{\text{inc}}(x; b_0) \propto (60{-}50) = 10$ and $\text{IG}_{\text{inc}}(\tilde{x}; b_0) \propto (62{-}50) = 12$. The difference of $2$ now reflects the genuine input difference, with no baseline-induced confounder. Any disproportionate attribution change beyond what the input difference warrants reveals discriminatory model behavior. We formalize this guarantee as follows.

\begin{theorem}[Baseline Consistency Isolates Discrimination]
\label{thm:baseline}
Let $f$ be a differentiable function and $(x, \tilde{x})$ a pair with consistent baseline $b$. If the model's gradients on non-protected features are approximately constant along both integration paths (i.e., $\frac{\partial f}{\partial x_k}|_{b + \alpha(x-b)} \approx \frac{\partial f}{\partial x_k}|_x$ for $\alpha \in [0,1]$), then for non-protected feature $k$:
\[
    \text{IG}_k(x; b) - \text{IG}_k(\tilde{x}; b) \approx (x_k - \tilde{x}_k) \cdot \frac{\partial f}{\partial x_k}\bigg|_x
\]
\end{theorem}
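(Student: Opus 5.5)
The plan is to substitute the constant-gradient hypothesis directly into the definition of integrated gradients and subtract the two resulting expressions. The hypothesis is read as saying that along both straight-line paths, from $b$ to $x$ and from $b$ to $\tilde{x}$, the partial derivative $\partial f/\partial x_k$ stays approximately equal to the single value $g_k := \frac{\partial f}{\partial x_k}\big|_x$. This common reference value is essential, since the target expression involves only the gradient at $x$.

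\textbf{Step 1.} For the factual, pull the approximately constant integrand out of the integral:
\[
\text{IG}_k(x;b) = (x_k - b_k)\int_0^1 \frac{\partial f}{\partial x_k}\Big|_{b+\alpha(x-b)}\,d\alpha \approx (x_k - b_k)\, g_k .
\]

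\textbf{Step 2.} Do the same along the counterfactual path $b+\alpha(\tilde{x}-b)$. Because both attributions use the consistent baseline $b=b_{y,a}$ of Principle~\ref{princ:baseline}, this gives $\text{IG}_k(\tilde{x};b)\approx(\tilde{x}_k-b_k)\,g_k$ with the \emph{same} $b_k$.

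\textbf{Step 3.} Subtract the two approximations. The $b_k g_k$ terms cancel, leaving $(x_k-\tilde{x}_k)\,g_k$, which is the claim.

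This cancellation is exactly where baseline consistency is used. With group-specific baselines the difference would instead be
\[
(x_k-\tilde{x}_k)\,g_k + (b'_k - b_k)\,g_k ,
\]
which contains the spurious baseline term from the income example preceding the statement.

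\textbf{Error control.} To make ``$\approx$'' precise, write $\varepsilon$ for a uniform bound on $\bigl|\partial_k f(z) - g_k\bigr|$ over both segments. Then the error in Step 1 is at most $|x_k-b_k|\,\varepsilon$ and the error in Step 2 is at most $|\tilde{x}_k-b_k|\,\varepsilon$. The total error is therefore at most $\varepsilon\,(|x_k-b_k|+|\tilde{x}_k-b_k|)$, which vanishes as $\varepsilon\to 0$.

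\textbf{Main obstacle.} The delicate point is the interpretation of the hypothesis on the counterfactual path. It must be taken to mean closeness to the gradient at $x$, not at $\tilde{x}$. A natural justification is that $x$ and $\tilde{x}$ are nearest neighbours on $\mathcal{F}$ (Definition~\ref{def:counterfactual}), so their gradients should be close. If $f$ has $L$-Lipschitz gradients, this can be made quantitative: the two gradients differ by at most $L\|x-\tilde{x}\|$, and that quantity is absorbed into $\varepsilon$.
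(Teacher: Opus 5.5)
Your proposal is correct and follows the paper's proof step for step. Like you, the paper reads the hypothesis on the counterfactual path as closeness to $\frac{\partial f}{\partial x_k}\big|_x$, collapses both integrals, lets $b_k$ cancel on subtraction, and contrasts this with the spurious term $(b^{1-a}_k-b^a_k)\frac{\partial f}{\partial x_k}\big|_x$ that group-specific baselines produce. Your explicit bound $\varepsilon\,(|x_k-b_k|+|\tilde{x}_k-b_k|)$ usefully sharpens the paper's informal ``$\approx$''. One caveat on your Lipschitz aside: the nearest-neighbour matching controls only $\|x_\mathcal{F}-\tilde{x}_\mathcal{F}\|$, not the full $\|x-\tilde{x}\|$.
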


\begin{proof}
By the definition of integrated gradients, for a non-protected feature $k$:
\[
\text{IG}_k(x; b) = (x_k - b_k)\int_0^1 \frac{\partial f}{\partial x_k}\bigg|_{b + \alpha(x - b)} d\alpha \
\text{IG}_k(\tilde{x}; b) 
\]
\[
= (\tilde{x}k - b_k)\int_0^1 \frac{\partial f}{\partial x_k}\bigg|_{b + \alpha(\tilde{x} - b)} d\alpha
\]
Under the assumption that gradients are approximately constant along both integration paths, i.e., $\frac{\partial f}{\partial x_k}\big|_{b + \alpha(x - b)} \approx \frac{\partial f}{\partial x_k}\big|_x$ and $\frac{\partial f}{\partial x_k}\big|_{b + \alpha(\tilde{x} - b)} \approx \frac{\partial f}{\partial x_k}\big|_x$ for all $\alpha \in [0,1]$, the integrals collapse:
\[
\text{IG}_k(x; b) \approx (x_k - b_k) \cdot \frac{\partial f}{\partial x_k}\bigg|_x \
\text{IG}_k(\tilde{x}; b) \approx (\tilde{x}_k - b_k) \cdot \frac{\partial f}{\partial x_k}\bigg|_x
\]
Subtracting yields:
\[
\text{IG}_k(x; b) - \text{IG}_k(\tilde{x}; b) \approx (x_k - \tilde{x}_k) \cdot \frac{\partial f}{\partial x_k}\bigg|_x
\]
Note that the baseline $b$ cancels entirely in the subtraction.
In contrast, with group-specific baselines $b^a$ and $b^{1-a}$, the same derivation yields:
\[
\text{IG}_k(x; b^a) - \text{IG}_k(\tilde{x}; b^{1-a}) \
\approx (x_k - b^a_k) \cdot \frac{\partial f}{\partial x_k}\bigg|_x - (\tilde{x}_k - b^{1-a}_k) \cdot 
\]
\[
\frac{\partial f} {\partial x_k}\bigg|_x \
= (x_k - \tilde{x}_k) \cdot \frac{\partial f}{\partial x_k}\bigg|_x + (b^{1-a}_k - b^a_k) \cdot \frac{\partial f}{\partial x_k}\bigg|_x.
\]
This introduces a spurious term $(b^{1-a}_k - b^a_k) \cdot \frac{\partial f}{\partial x_k}\big|_x$ that depends on group distribution differences rather than model discrimination.
\end{proof}

\noindent\textit{Remark.} Theorem~\ref{thm:baseline} is stated for the idealized case of approximately constant gradients; in practice, the nearest-neighbor counterfactuals from Definition~\ref{def:counterfactual} differ on multiple features, not just the protected attribute. The approximation tightens as match quality improves (smaller $\|x_\mathcal{F} - \tilde{x}_\mathcal{F}\|$), which we monitor via the coverage and distance metrics reported by Algorithm~\ref{alg:cf_generation}. The key guarantee is that attribution differences depend on input differences $(x_k - \tilde{x}_k)$ and model gradients, with no baseline-induced confounder. In contrast, group-specific baselines introduce an additional $(b_{y, a} - b_{y,1-a})$ term that conflates demographic distribution differences with discriminatory reasoning.

\subsection{The CEC Metric and Theoretical Properties}
\subsubsection{Normalization.}
IG vectors may have vastly different magnitudes across individuals. To compare explanation \emph{direction} rather than magnitude, we normalize:
\[
    \text{IG}_{\text{norm}}(x; b) = \frac{\text{IG}(x; b)}{\|\text{IG}(x; b)\|_2 + \epsilon}
\]
where $\epsilon = 10^{-8}$ prevents division by zero.
 
\subsubsection{The CEC Score.}
 
\begin{definition}[CEC Score]
\label{def:cec}
For a factual--counterfactual pair $(x, \tilde{x})$ with consistent baseline $b = b_{y,a}$, the raw L2 distance between normalized attribution vectors is:
\[
    \Delta_{\text{raw}}(x, \tilde{x}) = \left\| \text{IG}_{\text{norm}}(x; b) - \text{IG}_{\text{norm}}(\tilde{x}; b) \right\|_2
\]
Since both vectors have unit norm, $\Delta_{\text{raw}} \in [0, 2]$. We normalize to obtain a score in $[0, 1]$:
\[
    \Delta_{\text{CEC}}(x, \tilde{x}) = \!\left(\frac{\Delta_{\text{raw}}(x, \tilde{x})}{2}\right)
\]
where division by $2$ (the maximum L2 distance between unit vectors) maps the score to $[0, 1]$. The population-level CEC is:
\[
    \text{CEC}(f; \mathcal{D}) = \frac{1}{n}\sum_{i=1}^n \Delta_{\text{CEC}}(x_i, \tilde{x}_i)
\]
\end{definition}
 
\subsubsection{Theoretical Properties.}
The CEC metric admits a geometric interpretation, where each normalized attribution vector lies on the unit hypersphere in $\mathbb{R}^d$. The raw distance $\Delta_{\text{raw}}$ measures the chord length between two points on this hypersphere, related to angular separation by $\Delta_{\text{raw}} = \sqrt{2(1 - \cos\theta)}$. The normalization by $2$ converts this to a $[0,1]$ scale where the score equals $\frac{1}{2}\sqrt{2(1-\cos\theta)}$, providing an intuitive interpretation, where $0$ means identical explanations ($\theta = 0$) and $1$ means maximally opposed explanations ($\theta = \pi$).
 
\begin{proposition}[Bounded Range]
\label{prop:bounded}
$0 \leq \Delta_{\text{CEC}}(x, \tilde{x}) \leq 1$. The score is $0$ when explanations are identical (attribution vectors are parallel) and $1$ when they are maximally opposed (antipodal on the unit hypersphere).
\end{proposition}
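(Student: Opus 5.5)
The plan is to reduce everything to elementary facts about vectors of norm at most one, using the triangle inequality for the bounds and the chord--angle identity for the extremal cases. Write $u = \text{IG}_{\text{norm}}(x; b)$ and $v = \text{IG}_{\text{norm}}(\tilde{x}; b)$ with the common baseline $b = b_{y,a}$ of Principle~\ref{princ:baseline}. The first step is to record that $\|u\|_2 = \|\text{IG}(x;b)\|_2/(\|\text{IG}(x;b)\|_2+\epsilon) < 1$, and likewise for $v$. This holds for any input, including the degenerate case $\text{IG}(x;b)=0$, where $u=0$.

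For the range, nonnegativity is immediate because $\Delta_{\text{raw}}$ is a norm. For the upper bound, the triangle inequality gives $\|u - v\|_2 \le \|u\|_2 + \|v\|_2 < 2$. Dividing by $2$ yields $0 \le \Delta_{\text{CEC}} < 1$, so the range claim of Definition~\ref{def:cec} holds without invoking any property of the model $f$.

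For the extremal cases, I would pass to the idealized unit-vector setting used in the paper's geometric interpretation, i.e.\ take $\epsilon \to 0$ with nonzero attribution vectors, so that $\|u\|_2 = \|v\|_2 = 1$. Expanding the square gives $\|u-v\|_2^2 = 2 - 2\langle u, v\rangle = 2(1-\cos\theta)$, which is the stated relation $\Delta_{\text{raw}} = \sqrt{2(1-\cos\theta)}$. Hence $\Delta_{\text{CEC}} = \tfrac12\sqrt{2(1-\cos\theta)}$. This quantity is $0$ exactly when $\cos\theta = 1$, meaning the attribution vectors are positive multiples of each other. It equals $1$ exactly when $\cos\theta = -1$, meaning $u = -v$ (antipodal points). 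The function $\theta \mapsto \tfrac12\sqrt{2(1-\cos\theta)} = \sin(\theta/2)$ is monotone on $[0,\pi]$, which also recovers the bound $[0,1]$ in this idealized setting.

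The main obstacle is reconciling the $\epsilon$-regularized normalization with the exact extremal statements. With $\epsilon>0$, two parallel vectors $g$ and $cg$ with $c>0, c\neq 1$ normalize to $g/(\|g\|+\epsilon)$ and $cg/(c\|g\|+\epsilon)$. These differ by a term of order $\epsilon$, so the score is only approximately $0$. Likewise, the value $1$ is approached but never attained. I would handle this by stating the extremal characterization exactly for $\epsilon = 0$ with nonzero attributions, and then noting continuity in $\epsilon$. Specifically, I would bound $\bigl|\|u\|_2 - 1\bigr| \le \epsilon/\|\text{IG}(x;b)\|_2$ to show the deviation vanishes as $\epsilon \to 0$. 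The strict bounds $0 \le \Delta_{\text{CEC}} < 1$ hold for every $\epsilon>0$.
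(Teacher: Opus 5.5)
Your proof is correct and more careful than the paper's, though the route differs. The paper sets $g=\text{IG}_{\text{norm}}(x;b)$ and $\tilde g=\text{IG}_{\text{norm}}(\tilde x;b)$ and asserts $\|g\|_2=\|\tilde g\|_2=1$ \emph{by construction}. It then reads off both the range and the extremal cases from the single identity $\|g-\tilde g\|_2^2=2-2\cos\theta$ with $\cos\theta\in[-1,1]$.

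You separate the two claims. The range comes from the triangle inequality applied to the vectors actually computed, whose norms are $\|\text{IG}\|_2/(\|\text{IG}\|_2+\epsilon)<1$. This also covers zero attributions, where $\theta$ is undefined and the paper's unit-norm assumption fails. Only the characterization of the extremes uses the chord--angle identity, and you apply it in the idealized limit $\epsilon\to 0$ with nonzero attributions.

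The paper's route is shorter, handling bound and extremes with one identity. Yours is valid for the quantity as defined in Definition~\ref{def:cec}, and it shows what the unit-norm assumption hides. For $\epsilon>0$ the value $1$ is never attained. Positively parallel but unequal attributions $g$ and $cg$ with $c\neq 1$ give a score of order $\epsilon$ rather than exactly $0$. So the paper's attainment statements are exact only in the limit, or, for the value $0$, when the raw attribution vectors coincide. Your observation that \emph{parallel} in the statement must mean positively parallel, since antiparallel attributions give the maximal score, is also a correct sharpening.
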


\begin{proof}
Let $g = \text{IG}_{\text{norm}}(x; b)$ and $\tilde{g} = \text{IG}_{\text{norm}}(\tilde{x}; b)$. By construction, $\|g\|_2 = \|\tilde{g}\|_2 = 1$. Expanding the squared norm of their difference:
\[
\|g - \tilde{g}\|_2^2 = \|g\|_2^2 - 2\,g^\top\tilde{g} + \|\tilde{g}\|_2^2 = 2 - 2\cos\theta
\]
where $\theta \in [0, \pi]$ is the angle between $g$ and $\tilde{g}$. Since $\cos\theta \in [-1, 1]$, we have $\|g - \tilde{g}\|_2^2 \in [0, 4]$ and thus $\Delta_{\text{raw}} = \|g - \tilde{g}\|_2 \in [0, 2]$. Division by $2$ yields $\Delta_{\text{CEC}} = \Delta_{\text{raw}}/2 \in [0, 1]$.

The lower bound $\Delta_{\text{CEC}} = 0$ is attained when $\theta = 0$, i.e., $g = \tilde{g}$ (identical attribution directions). The upper bound $\Delta_{\text{CEC}} = 1$ is attained when $\theta = \pi$, i.e., $g = -\tilde{g}$ (antipodal vectors on the unit hypersphere, indicating maximally opposed explanations).
\end{proof}
 
\begin{proposition}[Sensitivity to Discrimination]
\label{prop:sensitivity}
If the model weighs feature $k$ differently across groups (i.e., $\frac{\partial f}{\partial x_k}\big|_x \neq \frac{\partial f}{\partial x_k}\big|_{\tilde{x}}$), then $\Delta_{\text{CEC}}(x, \tilde{x}) > 0$ for generic model parameterizations (exact cancellation by opposing changes in other features is non-generic).
\end{proposition}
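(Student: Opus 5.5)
We argue by contraposition: assume $\Delta_{\text{CEC}}(x,\tilde{x}) = 0$ and show that, outside a measure-zero set of parameters, this forces the gradients at $x$ and $\tilde{x}$ to agree on feature $k$.

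\textbf{Step 1 (zero score means proportional attributions).} By Proposition~\ref{prop:bounded}, $\Delta_{\text{CEC}}=0$ exactly when $\theta=0$, i.e.\ $\text{IG}_{\text{norm}}(x;b)=\text{IG}_{\text{norm}}(\tilde{x};b)$. Ignoring the $\epsilon$ term, which is negligible whenever the attributions are nonzero, this says
\[
\text{IG}(\tilde{x};b)=\lambda\,\text{IG}(x;b)\quad\text{for some }\lambda>0 .
\]
Reading this coordinatewise yields $d$ polynomial-type equations,
\[
(\tilde{x}_j-b_j)\,\bar{G}_j(\tilde{x})=\lambda\,(x_j-b_j)\,\bar{G}_j(x),\qquad \bar{G}_j(z):=\int_0^1\frac{\partial f}{\partial x_j}\Big|_{b+\alpha(z-b)}d\alpha .
\]

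\textbf{Step 2 (discrimination breaks the equations).} We linearize along the paths, as in Theorem~\ref{thm:baseline}, so that $\bar{G}_j(z)\approx\partial_j f|_z$. The hypothesis $\partial_k f|_x\neq\partial_k f|_{\tilde{x}}$ makes the ratio $\bar{G}_k(\tilde{x})/\bar{G}_k(x)$ differ from the ratio forced by the input displacements $(\tilde{x}_k-b_k)/(x_k-b_k)$. Proportionality then requires $\lambda$ to equal this distorted ratio. In turn, every other coordinate $j$ must satisfy the same identity with the same $\lambda$, which is exactly the ``exact cancellation by opposing changes in other features'' that the statement excludes.

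\textbf{Step 3 (genericity).} We formalize ``non-generic'' as follows. Regard the left- and right-hand sides as real-analytic functions of $\theta$ for a fixed pair $(x,\tilde{x})$ and baseline $b$, assuming smooth activations so that $f_\theta$ and its path integrals are analytic in $\theta$. Consider, for some $j\neq k$, the $2\times2$ minor
\[
M_{jk}(\theta)=\text{IG}_j(x)\,\text{IG}_k(\tilde{x})-\text{IG}_k(x)\,\text{IG}_j(\tilde{x}).
\]
Proportionality of the attribution vectors forces $M_{jk}(\theta)=0$. If $M_{jk}$ is not identically zero, its zero set has Lebesgue measure zero, so for almost every $\theta$ we get $M_{jk}(\theta)\neq0$ and hence $\Delta_{\text{CEC}}>0$. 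To prove $M_{jk}\not\equiv0$ it suffices to exhibit one parameter value where it is nonzero. A natural witness is a model linear in features $j$ and $k$ plus an interaction term that makes $\partial_k f$ depend on the protected coordinate. The gradient in $j$ is then constant while the gradient in $k$ differs between $x$ and $\tilde{x}$, and the minor evaluates to a nonzero multiple of $(x_j-b_j)(\tilde{x}_k-b_k)$ times the gradient gap.

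\textbf{Main obstacle.} The hardest step is making Step 3 rigorous. This requires (i) analyticity in $\theta$, which fails for ReLU networks and would need a piecewise-analytic argument, and (ii) the nondegeneracy conditions $x_j\neq b_j$ and $\tilde{x}_k\neq b_k$ for some $j\neq k$ in the witness. We would first try stating the result for smooth architectures and treating these nondegeneracy conditions as the precise meaning of ``generic.'' Beyond that, we would accept the same gradient linearization already used in Theorem~\ref{thm:baseline} rather than bounding the path-integral error exactly.
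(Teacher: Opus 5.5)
Your overall strategy is the paper's: reduce $\Delta_{\text{CEC}}=0$ to positive proportionality of $\text{IG}(x;b)$ and $\text{IG}(\tilde{x};b)$, then argue that proportionality is non-generic in $\theta$. Your genericity step, however, is genuinely different and considerably more solid.

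The paper writes proportionality as ratio equations $\text{IG}_j(x;b)/\text{IG}_j(\tilde{x};b)=c$, asserts they are $d-1$ independent constraints, and concludes they cut out a codimension-$(d-1)$ manifold. That argument has three weaknesses:
\begin{itemize}
\item It never shows that even one of these equations is non-trivial in $\theta$.
\item The ratio form breaks down when components vanish.
\item Its preliminary claim that $\partial_k f|_x\neq\partial_k f|_{\tilde{x}}$ forces $\text{IG}_k(x;b)\neq\text{IG}_k(\tilde{x};b)$ does not follow, since path averaging and the factors $x_k-b_k$ versus $\tilde{x}_k-b_k$ can compensate.
\end{itemize}
Your single $2\times 2$ minor avoids division and needs only one non-trivial equation. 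The fact that a real-analytic function on $\mathbb{R}^p$ that is not identically zero has a null zero set, together with an explicit witness, is exactly what turns ``measure zero'' into a proof rather than a dimension count. The price is the analyticity assumption, which excludes the ReLU networks used in the experiments; the paper's manifold argument needs smoothness too, but only implicitly. You also need not ignore $\epsilon$: equality of the $\epsilon$-normalized vectors already gives exact proportionality with $\lambda=(\|\text{IG}(\tilde{x};b)\|_2+\epsilon)/(\|\text{IG}(x;b)\|_2+\epsilon)>0$.

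Three cautions follow.

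\emph{Drop the linearization.} Discard Step~2 and the plan to fall back on the linearization of Theorem~\ref{thm:baseline}. Both integration paths start at the same point $b$, so if the gradient is approximately constant along each path then $\partial_k f|_x\approx\partial_k f|_b\approx\partial_k f|_{\tilde{x}}$, which contradicts the very hypothesis you want to use. Step~3 does not need it, because your witness integrates in closed form.

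\emph{Fix the witness computation.} Your stated value of the minor is right only when $(x_j-b_j)(\tilde{x}_k-b_k)=(x_k-b_k)(\tilde{x}_j-b_j)$, for example when $x$ and $\tilde{x}$ agree on coordinates $j,k$. Even then, the relevant gap is the path-averaged one, not the endpoint gap. In general you must tune the linear and interaction coefficients. Non-vanishing then follows from $(x_j-b_j)(\tilde{x}_k-b_k)\neq 0$ plus the interacting coordinate differing between $x$ and $\tilde{x}$. The witness must also be a parameter value of the actual architecture, or a $C^1$ limit of such. A limit suffices because $M_{jk}$ depends continuously on $f$ in $C^1$ over the segments from $b$ to $x$ and to $\tilde{x}$.

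\emph{The hypothesis is never used.} Step~3 does not invoke $\partial_k f|_x\neq\partial_k f|_{\tilde{x}}$ anywhere. A purely linear witness already gives $M_{jk}\not\equiv 0$ whenever $(x_j-b_j,\,x_k-b_k)$ and $(\tilde{x}_j-b_j,\,\tilde{x}_k-b_k)$ are not parallel, which is typical for nearest-neighbor matches. So what your argument, and the paper's, actually proves is that $\Delta_{\text{CEC}}>0$ for almost every $\theta$ regardless of discrimination. This is consistent with the paper's remark about residual CEC for fair models. It does mean the proposition is a genericity statement, not a guarantee that positive CEC certifies discriminatory reasoning.
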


\begin{proof}
Let $g = \text{IG}{\text{norm}}(x; b)$ and $\tilde{g} = \text{IG}{\text{norm}}(\tilde{x}; b)$ denote the normalized attribution vectors. Since $|g| = |\tilde{g}| = 1$, we have $\Delta_{\text{CEC}} = \frac{1}{2}|g - \tilde{g}|_2 = 0$ if and only if $g = \tilde{g}$, i.e., the normalized attribution vectors are identical.
Suppose $\frac{\partial f}{\partial x_k}\big|x \neq \frac{\partial f}{\partial x_k}\big|{\tilde{x}}$ for some feature $k$. By the IG formula, this gradient difference propagates into the unnormalized attributions: $\text{IG}_k(x; b)$ depends on $\frac{\partial f}{\partial x_k}$ evaluated along the path from $b$ to $x$, while $\text{IG}_k(\tilde{x}; b)$ depends on $\frac{\partial f}{\partial x_k}$ evaluated along the path from $b$ to $\tilde{x}$. 
Since the model weighs feature $k$ differently at $x$ versus $\tilde{x}$, the unnormalized vectors $\text{IG}(x; b)$ and $\text{IG}(\tilde{x}; b)$ differ in at least the $k$-th component. After $L_2$ normalization, $g = \tilde{g}$ would require all component-wise ratios to be equal: $\frac{\text{IG}_j(x;b)}{\text{IG}j(\tilde{x};b)} = c$ for all $j$ and some constant $c > 0$. 
This is a system of $d-1$ independent constraints on the model parameters. Since the model parameter space is continuous, the set of parameters that satisfy all $d-1$ constraints simultaneously has measure zero (it defines a manifold of codimension $d-1$). Thus, for generic model parameterizations, $g \neq \tilde{g}$ and $\Delta{\text{CEC}} > 0$. 
\end{proof}

These properties ensure that CEC reliably detects procedural discrimination whenever the model applies different feature weights to different groups. We note that for a model that treats all groups identically, residual $\Delta_{\text{CEC}} > 0$ may still arise from input differences between $x$ and $\tilde{x}$; however, this residual is bounded by match quality and does not reflect discriminatory reasoning.

\begin{figure}[t]
\centering
\begin{tikzpicture}[
    font=\small,
    box/.style={draw, rounded corners=3pt, minimum height=0.85cm, text width=2.3cm, align=center, font=\footnotesize},
    smallbox/.style={draw, rounded corners=2pt, minimum height=0.6cm, text width=1.6cm, align=center, font=\scriptsize},
    arrow/.style={-Stealth, thick},
    lossarrow/.style={-Stealth, thick, red!60!black},
    >=Stealth
]
 
\node[font=\footnotesize\bfseries, text=red!60!black] at (0.0, 4.2) {Multi-Objective Training};
 
\node[box, fill=yellow!15] (pairs) at (0, 3.2) {Paired Data\\$(x_i, \tilde{x}_i, b_{y_i,a_i})$};
 
\node[box, fill=green!12] (model) at (3.1, 3.2) {Neural Net $f_\theta$};
 
\node[smallbox, fill=orange!12] (pred) at (0.7, 1.7) {$\hat{y}_i, \hat{\tilde{y}}_i$};
\node[smallbox, fill=orange!12] (ig1) at (3.1, 1.7) {IG$(x_i; b_{y_i,a_i})$};
\node[smallbox, fill=orange!12] (ig2) at (5.3, 1.7) {IG$(\tilde{x}_i; b_{y_i,a_i})$};
 
 
\node[smallbox, fill=red!10, text width=1.8cm] (lpred) at (0.0, 0.3) {$\mathcal{L}_{\text{pred}}$ (BCE)};
\node[smallbox, fill=red!10, text width=1.8cm] (leo) at (2.5, 0.3) {$\mathcal{L}_{\text{EO}}$ (Eq.\ Odds)};
\node[smallbox, fill=red!10, text width=1.8cm] (lcec) at (5.2, 0.3) {$\mathcal{L}_{\text{CEC}}$ (Expl.)};
 
\node[box, fill=red!18, text width=4.5cm] (total) at (2.6, -0.9) {$\mathcal{L} = \mathcal{L}_{\text{pred}} + \lambda_{\text{EO}}\mathcal{L}_{\text{EO}} + \lambda_{\text{CEC}}\mathcal{L}_{\text{CEC}}$};
 
\draw[arrow] (pairs) -- (model);
\draw[arrow] (model) -- (pred);
\draw[arrow] (model) -- (ig1);
\draw[arrow] (model) -- (ig2);
\draw[lossarrow] (pred) -- (lpred);
\draw[lossarrow] (pred) -- (leo);
\draw[lossarrow] (ig1) -- (lcec);
\draw[lossarrow] (ig2) -- (lcec);
\draw[lossarrow] (lpred) -- (total);
\draw[lossarrow] (leo) -- (total);
\draw[lossarrow] (lcec) -- (total);
 
\draw[arrow, dashed, gray] (total.east) -| (6.4, 3.2) -- (model.east) node[pos=0.0, right, font=\scriptsize, text=gray] {$\nabla_\theta$};
 
\end{tikzpicture}
\caption{Phase~2 (Training): Each minibatch passes through the model to compute predictions and integrated gradients for both factual and counterfactual inputs (using the same baseline). Three losses, prediction accuracy, equalized odds, and explanation consistency are combined and backpropagated.}
\label{fig:phase2}
\end{figure} 
 
\subsection{Training Objective}
We incorporate CEC as a differentiable regularizer alongside prediction and outcome fairness losses as shown in Figure \ref{fig:phase2} and Algorithm \ref{alg:training}:
\[
\label{eq:full_loss}
    \min_\theta \;\; \mathcal{L}_{\text{pred}}(\theta) + \lambda_{\text{EO}}\, \mathcal{L}_{\text{EO}}(\theta) + \lambda_{\text{CEC}}\, \mathcal{L}_{\text{CEC}}(\theta)
\]
where $\mathcal{L}_{\text{pred}}$ is the binary cross-entropy loss:
\[
    \mathcal{L}_{\text{pred}} = -\frac{1}{m}\sum_{i=1}^m \left[y_i \log \sigma(f_\theta(x_i)) + (1{-}y_i)\log(1{-}\sigma(f_\theta(x_i)))\right]
\]
$\mathcal{L}_{\text{EO}}$ enforces equalized odds using differentiable soft rates:
\[
    \mathcal{L}_{\text{EO}} = (\widehat{\text{TPR}}_0 - \widehat{\text{TPR}}_1)^2 + (\widehat{\text{FPR}}_0 - \widehat{\text{FPR}}_1)^2
\]
where $\widehat{\text{TPR}}_g = \frac{\sum_{i: y_i=1, a_i=g} \sigma(f_\theta(x_i))}{\sum_{i: y_i=1, a_i=g} 1}$ and $\widehat{\text{FPR}}_g = \frac{\sum_{i: y_i=0, a_i=g} \sigma(f_\theta(x_i))}{\sum_{i: y_i=0, a_i=g} 1}$ replace hard predictions with sigmoid outputs $\sigma(f_\theta(x))$ to ensure differentiability. And $\mathcal{L}_{\text{CEC}}$ enforces explanation consistency:
\[
    \mathcal{L}_{\text{CEC}} = \frac{1}{m}\sum_{i=1}^m \left(\frac{\left\|\text{IG}_{\text{norm}}(x_i; b_{y_i,a_i}) - \text{IG}_{\text{norm}}(\tilde{x}_i; b_{y_i,a_i})\right\|_2}{2}\right)^2
\]
We square the normalized CEC scores to obtain a differentiable objective that penalizes large deviations more heavily than small ones. Since individual CEC scores lie in $[0,1]$, the loss is bounded in $[0,1]$, providing a consistent scale relative to the other loss terms.
 
The three terms balance utility (accurate predictions), outcome fairness (equalized error rates), and procedural fairness (consistent reasoning). An important distinction is that these are not redundant because $\mathcal{L}_{\text{EO}} \approx 0$ does not imply $\mathcal{L}_{\text{CEC}} \approx 0$. Models can achieve perfect equalized odds while using entirely different feature weightings for different groups (Regime~B in our taxonomy). Conversely, explanation consistency alone does not guarantee outcome fairness. The joint objective ensures both are satisfied simultaneously. The CEC loss is differentiable through the integrated gradients computation via standard backpropagation, and its $[0,1]$ range ensures stable gradient magnitudes relative to the other loss components.
 
\subsection{Training Algorithm}
Algorithm \ref{alg:training} describes the complete procedure. Phase 1 preprocessing (computing baselines and counterfactuals) runs once before training. Phase 2 shows the training, then proceeds via standard minibatch SGD with the augmented loss. 
 
\begin{algorithm}[t]
\caption{CEC Training}
\label{alg:training}
\small
\begin{algorithmic}[1]
\REQUIRE Data $\mathcal{D}$, financial features $\mathcal{F}$, $\lambda_{\text{EO}}, \lambda_{\text{CEC}}, \eta, E, T$
\ENSURE Trained model $f_\theta$
\STATE \textbf{// Phase 1: Preprocessing}
\STATE $\{b_{y,a}\}_{y,a \in \{0,1\}} \gets$ label-group means from $\mathcal{D}$
\STATE $\{\tilde{x}_i\} \gets$ Algorithm~\ref{alg:cf_generation} on $\mathcal{D}$
\STATE Initialize parameters $\theta$
\STATE \textbf{// Phase 2: Training}
\FOR{epoch $= 1, \ldots, E$}
    \FOR{each minibatch $\mathcal{B} = \{(x_i, y_i, a_i, \tilde{x}_i)\}$}
        \STATE $\mathcal{L}_{\text{pred}} \gets$ BCE$(f_\theta, \mathcal{B})$
        \STATE $\mathcal{L}_{\text{EO}} \gets (\text{TPR}_0 {-} \text{TPR}_1)^2 + (\text{FPR}_0 {-} \text{FPR}_1)^2$
        \STATE $\mathcal{L}_{\text{CEC}} \gets 0$
        \FOR{$(x_i, \tilde{x}_i) \in \mathcal{B}$}
            \STATE $g_i \gets \text{IG}(f_\theta, x_i, b_{y_i,a_i}, T)$
            \STATE $\tilde{g}_i \gets \text{IG}(f_\theta, \tilde{x}_i, b_{y_i,a_i}, T)$ \COMMENT{Same $b$}
            \STATE $g_i \gets g_i / (\|g_i\| + \epsilon)$; \; $\tilde{g}_i \gets \tilde{g}_i / (\|\tilde{g}_i\| + \epsilon)$
            \STATE $\mathcal{L}_{\text{CEC}} \mathrel{+}= (\|g_i - \tilde{g}_i\|_2 \;/\; 2)^2$
        \ENDFOR
        \STATE $\mathcal{L}_{\text{CEC}} \gets \mathcal{L}_{\text{CEC}} / |\mathcal{B}|$
        \STATE $\theta \gets \theta - \eta\,\nabla_\theta(\mathcal{L}_{\text{pred}} + \lambda_{\text{EO}}\mathcal{L}_{\text{EO}} + \lambda_{\text{CEC}}\mathcal{L}_{\text{CEC}})$
    \ENDFOR
\ENDFOR
\end{algorithmic}
\end{algorithm}

\subsection{Computational Complexity.}
CEC introduces overhead in two phases. In Phase~1, computing label-group baselines costs $O(nd)$ where $n$ is the dataset size and $d$ the feature dimensionality. Counterfactual generation via Algorithm~\ref{alg:cf_generation} requires building four KD-tree indices at $O(n \log n \cdot d_\mathcal{F})$ each, where $d_\mathcal{F} = |\mathcal{F}|$, followed by $n$ nearest-neighbor queries at $O(d_\mathcal{F} \log n)$ per query, yielding a total preprocessing cost of $O(n \log n \cdot d_\mathcal{F})$. In Phase~2, each training batch incurs the standard forward-pass cost of $O(md)$ for prediction and equalized odds losses, where $m$ is the batch size. The CEC loss dominates: computing integrated gradients for both factual and counterfactual requires $2mT$ forward passes through the network, where $T$ is the number of integration steps, giving a per-batch CEC cost of $O(mTd)$. The total per-epoch cost is therefore $O\!\left(\frac{n}{m} \cdot mTd\right) = O(nTd)$, which is comparable to $O(nd)$ for standard training and an additional overhead factor of $T$. With GPU parallelism of the integration steps and shared backward pass costs, the computation cost can be lowered by $3\times$ to $5 \times$ rather than the na\"{i}ve $32\times$

\section{Experimental Setting}
Our experiments are designed to answer four  questions:
\begin{itemize}
    \item \textbf{RQ1:} Can models satisfy outcome fairness (equalized odds) while exhibiting hidden procedural bias?
    
    \item \textbf{RQ2:} Does the CEC framework effectively detect and mitigate hidden procedural bias while maintaining outcome fairness and predictive accuracy?
    
    \item \textbf{RQ3:} How do existing fairness interventions perform with respect to procedural fairness metrics?
    
    \item \textbf{RQ4:} Does the effectiveness of CEC generalize across synthetic data, benchmark datasets, and real-world lending data?
\end{itemize}
\subsection{Baselines}
We evaluated CEC against six baselines across four datasets of increasing complexity and realism. We compared against representative methods from each fairness paradigm: \textbf{Unconstrained} (standard neural network, no fairness constraint), \textbf{Disparate Impact Remover} (DIR) \cite{feldman2015certifying} for pre-processing, \textbf{Hardt Post-Processing} \cite{hardt2016equality} for post-processing, \textbf{Agarwal Reductions} \cite{agarwal2018reductions} and \textbf{Adversarial Debiasing} \cite{zhang2018mitigating} for in-processing, and \textbf{Lagrangian Fair Learning} \cite{cotter2019optimization} for constrained optimization. All methods target equalized odds constraints for direct comparability with CEC.

\subsection{Evaluation Metrics}
 We evaluated models along three dimensions using complementary metrics.
 \begin{itemize}
     \item \textbf{Utility:} F1 score (harmonic mean of precision and recall) and AUC (area under the ROC curve) measure classification quality.
     \item \textbf{Outcome Fairness:} Equalized odds gap $= \max(|\text{TPR}_0 - \text{TPR}_1|, |\text{FPR}_0 - \text{FPR}_1|)$ and statistical parity gap $= |P(\hat{Y}{=}1|A{=}0) - P(\hat{Y}{=}1|A{=}1)|$ capture group-level outcome disparities.
     \item \textbf{Procedural Fairness:} The \emph{CEC score} (Definition~\ref{def:cec}) measures average explanation consistency. The \emph{Prediction Flip Rate} (PFR) $= P(\hat{y}(x) \neq \hat{y}(\tilde{x}))$ captures individual-level outcome instability.
     The \emph{regime distribution} reports the fraction of test examples in each of the four regimes (A--D) from Table~\ref{tab:taxonomy}, with particular emphasis on Regime~B (hidden bias).
 \end{itemize}
PFR captures complementary aspects of fairness by detecting outcome-level instability (Regimes~C and~D). A model with low PFR but high CEC is precisely one that exhibits hidden procedural bias, which is the primary target of our framework.

\subsection{Datasets}
As mentioned, we evaluated our method on four different data sets.

\textbf{Synthetic Data.} We generated $n{=}10{,}000$ samples with $d{=}20$ features partitioned into financial ($|\mathcal{F}|{=}10$), proxy ($|\mathcal{P}|{=}5$), and noise ($|\mathcal{N}|{=}5$) features. Financial features were generated independently of the protected attribute $A$; proxy features were shifted by $0.5\sigma$ between groups to simulate real-world correlations (e.g., residential segregation). Ground-truth labels depended only on financial features: $y = \mathbb{I}[\sum_{j \in \mathcal{F}} w_j x_j + \epsilon > \tau]$. 

\textbf{German Credit} \cite{statlog_german_credit_data_144}. This data set is a standard fairness benchmark with 1,000 instances, 20 features, and binary creditworthiness labels. We used gender as the protected attribute and defined $\mathcal{F}$ to include credit amount, duration, installment rate, present residence, age, number of existing credits, and number of dependents. Features encoding gender (personal status) and immigration status were excluded from $\mathcal{F}$.

\textbf{Adult Income} \cite{adult_2}. Derived from the 1994 U.S.\ Census, this dataset contains 48,842 instances with demographic and employment features, and a binary label indicating whether annual income exceeds $\$50K.$ We used gender as the protected attribute. Financial features included capital-gain, capital-loss, hours-per-week, and occupation category. Education level and marital status were excluded from $\mathcal{F}$ as potential demographic proxies.

\textbf{HMDA Mortgage Data} \cite{hmda_data}. We used 2024 Home Mortgage Disclosure Act data from the Consumer Financial Protection Bureau, which provides real-world mortgage lending records with regulatory-grade detail. The protected attribute was race (White vs.\ non-White). Financial features included loan amount, income, debt-to-income ratio, property value, loan-to-value ratio, and loan term. We excluded census tract, county code, and applicant ethnicity from $\mathcal{F}$. This dataset provides the most realistic evaluation, reflecting actual lending patterns subject to federal fair lending oversight.

\subsection{Implementation Details}
All methods used neural networks with the same architecture, which was tuned to two hidden layers of sizes $[128, 64]$ with ReLU activations and dropout rate $0.2$. We trained for 30 epochs using the Adam optimizer with learning rate $3 \times 10^{-4}$ and batch size 64. For CEC, we set $\lambda_{\text{EO}} = 1.0$, $\lambda_{\text{CEC}} = 1.0$, and $T{=}32$ IG integration steps. We performed 5-fold cross-validation and report means and standard deviations across folds.

\begin{table*}[t]
\centering
\caption{Performance comparison across four datasets. Results are mean $\pm$ std. $\downarrow$ indicates lower is better (EO, SP, CEC), and $\uparrow$ indicates higher is better (AUC, F1). \textbf{Bold} indicates best performance.}
\label{tab:main_results}
\resizebox{\textwidth}{!}{%
\begin{tabular}{llccccccc}
\toprule
\textbf{Dataset} & \textbf{Metric} & \textbf{Unconstrained} & \textbf{DIR} & \textbf{Hardt} & \textbf{Agarwal} & \textbf{Lagrangian} & \textbf{Adversarial} & \textbf{\textsc{CEC (MODEL) }} \\
\midrule

\multirow{5}{*}{German Credit}
& AUC$\uparrow$ & 0.735$\pm$0.032 & 0.512$\pm$0.085 & 0.569$\pm$0.068 & 0.620$\pm$0.038 & \textbf{0.744$\pm$0.028} & 0.731$\pm$0.019 & 0.693$\pm$0.091 \\
& F1$\uparrow$  & \textbf{0.826$\pm$0.008} & 0.579$\pm$0.060 & 0.622$\pm$0.040 & 0.813$\pm$0.023 & \textbf{0.826$\pm$0.007} & 0.818$\pm$0.009 & 0.815$\pm$0.017 \\
& EO$\downarrow$ & 0.210$\pm$0.110 & 0.339$\pm$0.139 & 0.182$\pm$0.102 & 0.212$\pm$0.061 & 0.252$\pm$0.121 & 0.135$\pm$0.057 & \textbf{0.018$\pm$0.020} \\
& SP$\downarrow$ & 0.085$\pm$0.045 & 0.166$\pm$0.066 & 0.062$\pm$0.035 & 0.083$\pm$0.044 & 0.108$\pm$0.032 & 0.038$\pm$0.040 & \textbf{0.009$\pm$0.011} \\
& CEC$\downarrow$ & 0.559$\pm$0.016 & 0.536$\pm$0.007 & 0.547$\pm$0.018 & 0.544$\pm$0.014 & 0.561$\pm$0.007 & 0.556$\pm$0.008 & \textbf{0.208$\pm$0.020} \\
& Reg.~B (\%)$\downarrow$ & 67.9$\pm$2.1 & 90.7$\pm$2.6 & 91.6$\pm$1.8 & 89.9$\pm$2.6 & 68.0$\pm$2.9 & 71.4$\pm$2.7 & \textbf{17.1$\pm$6.6} \\
\midrule

\multirow{5}{*}{Adult Income}
& AUC$\uparrow$ & \textbf{0.907$\pm$0.002} & 0.734$\pm$0.048 & 0.771$\pm$0.015 & 0.742$\pm$0.007 & \textbf{0.907$\pm$0.002} & 0.895$\pm$0.001 & 0.876$\pm$0.009 \\
& F1$\uparrow$  & 0.678$\pm$0.005 & 0.505$\pm$0.034 & 0.534$\pm$0.016 & 0.627$\pm$0.010 & \textbf{0.682$\pm$0.009} & 0.657$\pm$0.004 & 0.656$\pm$0.066 \\
& EO$\downarrow$ & 0.124$\pm$0.030 & 0.297$\pm$0.265 & 0.351$\pm$0.125 & \textbf{0.048$\pm$0.029} & 0.140$\pm$0.059 & 0.158$\pm$0.024 & 0.062$\pm$0.036 \\
& SP$\downarrow$ & 0.180$\pm$0.016 & 0.203$\pm$0.100 & 0.276$\pm$0.037 & 0.112$\pm$0.014 & 0.185$\pm$0.018 & 0.116$\pm$0.017 & \textbf{0.083$\pm$0.019} \\
& CEC$\downarrow$ & 0.478$\pm$0.008 & 0.582$\pm$0.008 & 0.602$\pm$0.005 & 0.600$\pm$0.009 & 0.479$\pm$0.006 & 0.485$\pm$0.002 & \textbf{0.083$\pm$0.002} \\
& Reg.~B (\%)$\downarrow$ & 67.8$\pm$2.6 & 91.4$\pm$1.3 & 92.1$\pm$1.1 & 93.4$\pm$0.3 & 66.9$\pm$1.5 & 67.2$\pm$1.9 & \textbf{1.0$\pm$0.2} \\
\midrule

\multirow{5}{*}{Synthetic}
& AUC$\uparrow$ & {0.992$\pm$0.002} & 0.634$\pm$0.029 & 0.582$\pm$0.064 & 0.970$\pm$0.007 & {0.992$\pm$0.002} & {0.992$\pm$0.002} & \textbf{0.996$\pm$0.005} \\
& F1$\uparrow$  & 0.937$\pm$0.009 & 0.552$\pm$0.026 & 0.501$\pm$0.043 & \textbf{0.964$\pm$0.008} & 0.934$\pm$0.010 & 0.939$\pm$0.008 & 0.951$\pm$0.010 \\
& EO$\downarrow$ & 0.049$\pm$0.028 & 0.065$\pm$0.015 & 0.596$\pm$0.145 & 0.023$\pm$0.011 & 0.052$\pm$0.026 & 0.026$\pm$0.015 & \textbf{0.019$\pm$0.012} \\
& SP$\downarrow$ & 0.035$\pm$0.008 & 0.024$\pm$0.014 & 0.651$\pm$0.072 & \textbf{0.016$\pm$0.021} & 0.036$\pm$0.015 & 0.023$\pm$0.015 & 0.021$\pm$0.021 \\
& CEC$\downarrow$ & 0.293$\pm$0.003 & 0.612$\pm$0.034 & 0.601$\pm$0.027 & 0.624$\pm$0.016 & 0.293$\pm$0.001 & 0.292$\pm$0.001 & \textbf{0.275$\pm$0.003} \\
& Reg.~B (\%)$\downarrow$ & 38.9$\pm$0.9 & 98.7$\pm$1.1 & 98.5$\pm$0.9 & 98.5$\pm$1.5 & 40.2$\pm$1.4 & 39.1$\pm$1.0 & \textbf{30.0$\pm$1.5} \\
\midrule

\multirow{5}{*}{HMDA}
& AUC$\uparrow$ & \textbf{0.998$\pm$0.001} & 0.817$\pm$0.016 & 0.805$\pm$0.023 & 0.744$\pm$0.007 & \textbf{0.998$\pm$0.001} & \textbf{0.998$\pm$0.001} & 0.995$\pm$0.078 \\
& F1$\uparrow$  & 0.989$\pm$0.003 & 0.765$\pm$0.018 & 0.764$\pm$0.018 & 0.756$\pm$0.008 & \textbf{0.990$\pm$0.003} & \textbf{0.990$\pm$0.002} & 0.968$\pm$0.090 \\
& EO$\downarrow$ & 0.006$\pm$0.003 & 0.293$\pm$0.194 & 0.065$\pm$0.011 & 0.045$\pm$0.022 & \textbf{0.005$\pm$0.001} & \textbf{0.005$\pm$0.001} & 0.008$\pm$0.007 \\
& SP$\downarrow$ & 0.028$\pm$0.014 & 0.154$\pm$0.108 & 0.020$\pm$0.009 & 0.017$\pm$0.010 & 0.027$\pm$0.015 & 0.027$\pm$0.014 & \textbf{0.016$\pm$0.013} \\
& CEC$\downarrow$ & 0.360$\pm$0.007 & 0.389$\pm$0.016 & 0.398$\pm$0.029 & 0.394$\pm$0.037 & 0.362$\pm$0.009 & 0.359$\pm$0.011 & \textbf{0.159$\pm$0.011} \\
& Reg.~B (\%)$\downarrow$ & 50.7$\pm$0.8 & 52.0$\pm$3.3 & 49.7$\pm$2.4 & 50.8$\pm$1.2 & 51.5$\pm$1.7 & 52.7$\pm$2.4 & \textbf{1.0$\pm$2.0} \\
\bottomrule
\end{tabular}%
}
\end{table*}

\section{Experimental Results}
The results in Table~\ref{tab:main_results} show a clear disconnect between outcome fairness and procedural fairness across all four datasets. Several baseline methods, such as Adversarial Debiasing and Agarwal Reductions, achieved competitive or even superior equalized odds and statistical parity scores. Particularly, these methods substantially reduced outcome disparities on multiple datasets. However, these same methods exhibited CEC scores that are comparable to, or in some cases \emph{worse than}, the unconstrained baseline. This confirms the motivation of our work, which is that optimizing for outcome fairness does not address hidden procedural bias. The CEC-trained model is the only method that consistently achieved the lowest CEC scores across all four datasets, sometimes by a wide margin, while simultaneously delivering competitive equalized odds and statistical parity performance. Also, the CEC-trained model did so with only modest reductions in predictive utility, maintaining F1 scores within a few percentage points of the best-performing baselines on German Credit and Adult Income. On Synthetic and HMDA, the higher variance in CEC's utility metrics reflects the sensitivity of the multi-objective loss at the chosen $\lambda$ configuration.

The magnitude of CEC's improvement varied meaningfully across datasets, and these differences reflect the underlying structure of procedural bias in each dataset. On Synthetic data, where the ground-truth labeling function uses only financial features and the procedural bias is embedded synthetically, all methods started with relatively low CEC scores, and the gap between the CEC model and baselines was correspondingly modest. This is expected because the synthetic design isolates procedural bias in a controlled environment rather than in the trained models themselves. The picture changed dramatically on German Credit, a dataset collected in 1990s Germany when gender-differentiated lending practices were more prevalent and structurally embedded. In this dataset, all baseline methods exhibited CEC scores above 0.53, while the CEC-model reduced this to 0.21. The historical context matters because features like employment tenure carry a gender-correlated signal from an era of systematic labor market segregation, creating precisely the kind of hidden procedural pathways that outcome-based methods cannot detect. Adult Income showed a similar pattern, with the CEC-model achieving an even more significant reduction from 0.60 to 0.08, reflecting the well-documented structural entanglement of race and gender with income-determining features in U.S. labor data in the 1990s. On HMDA, which represents recent mortgage lending under active ECOA oversight, baseline CEC scores clustered around 0.36--0.40, and the CEC model reduced this to 0.26. The smaller relative improvement is expected because modern regulatory pressure has likely reduced the most overt forms of procedural bias in mortgage underwriting, yet a meaningful procedural gap persisted in these experiments.

\subsection{PFR-CEC Space}

\begin{figure*}[t]
\centering
\includegraphics[width=0.98\columnwidth]{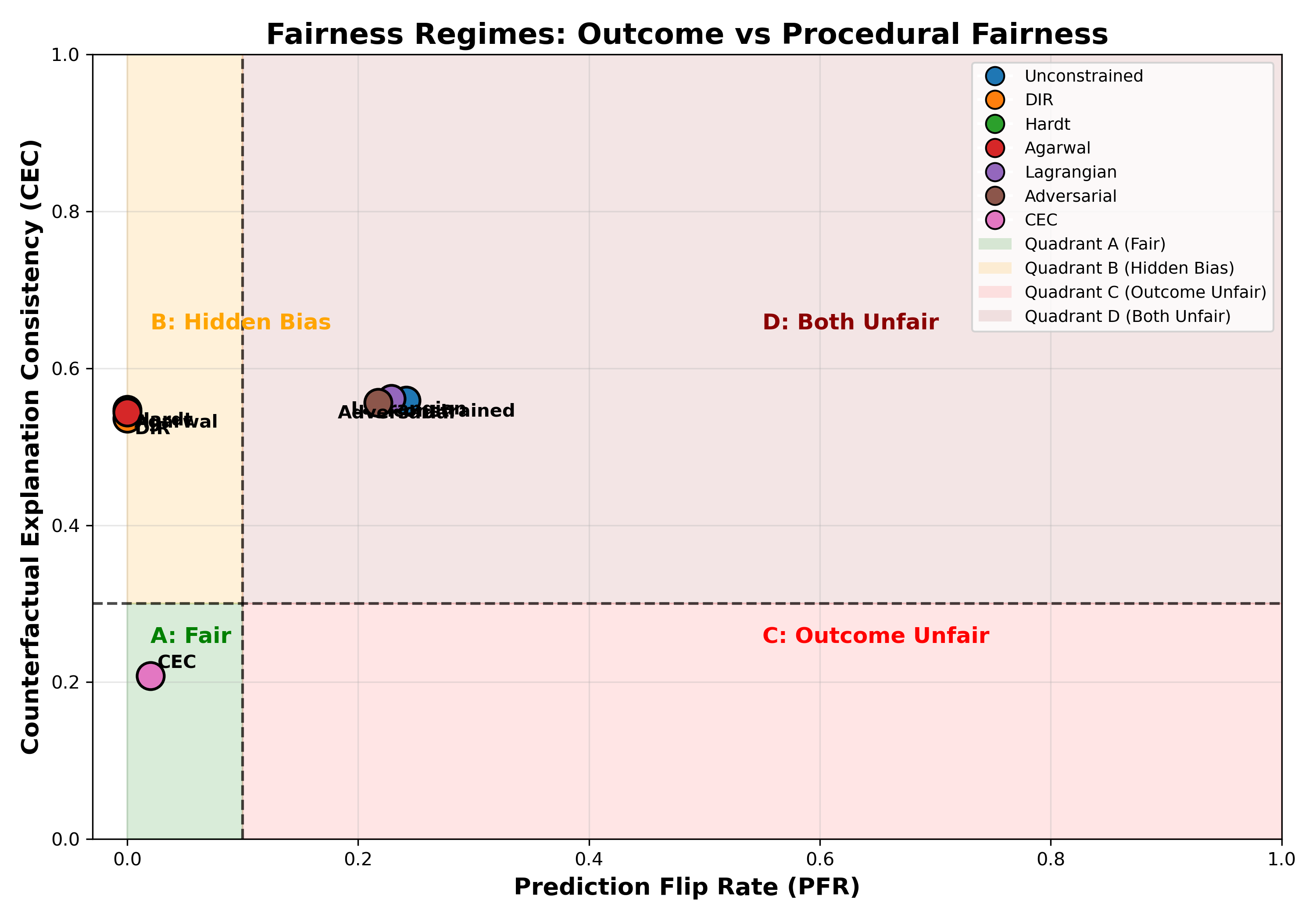}
\includegraphics[width=0.98\columnwidth]{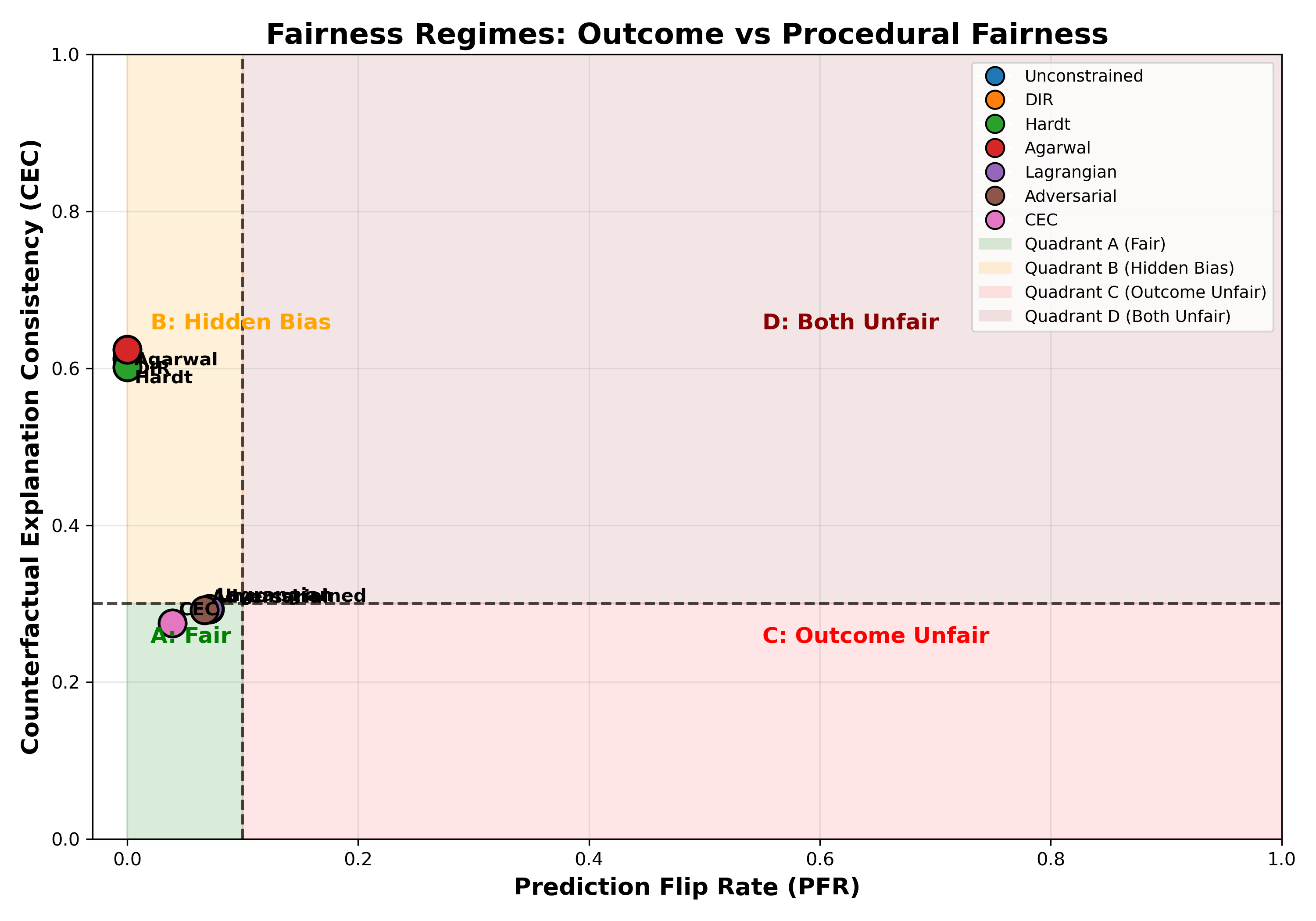}
\caption{Fairness regime plots for German Credit (left) and Synthetic (right). Each method is positioned by its Prediction Flip Rate (PFR) and (CEC). CEC-model is the only method consistently in Quadrant A on both datasets.}
\label{fig:regimes}
\end{figure*}

Figure \ref{fig:regimes} visualizes the fairness regime landscape for the German Credit and Synthetic datasets by plotting each method in PFR--CEC space, where the four quadrants correspond to the regimes in Table \ref{tab:taxonomy}. On German Credit, the methods are split into several clusters. DIR, Hardt, and Agarwal achieve low prediction flip rates but exhibit high CEC, placing them in Quadrant B, which means they equalized predictions across counterfactual pairs while relying on fundamentally different reasoning, demonstrating the hidden procedural bias that motivates our work. Unconstrained, Lagrangian, and Adversarial showed worse results. Their high PFR combined with high CEC positions them in Quadrant D, where both outcome and procedural fairness are violated. The CEC model was the only method that reached Quadrant A, achieving low scores on both axes simultaneously. 

The Synthetic dataset shows a slightly different result. Here, the Unconstrained, Lagrangian, and Adversarial methods sit near the boundary between Quadrants A and B, reflecting the moderate procedural bias embedded by design. Meanwhile, DIR, Hardt, and Agarwal are pushed into Quadrant B with higher CEC, suggesting that their fairness interventions inadvertently \emph{increased} explanation instability even as they equalized predictions. The CEC model occupies the lowest CEC position. Taken together, the two plots illustrate that Quadrant B is a deficiency that most previous fairness algorithms lack, and the CEC model is the only method among those compared that systematically moves models toward Quadrant A.

\subsection{Regime Distribution}
\begin{figure*}[t]
\centering
\includegraphics[width=0.98\columnwidth]{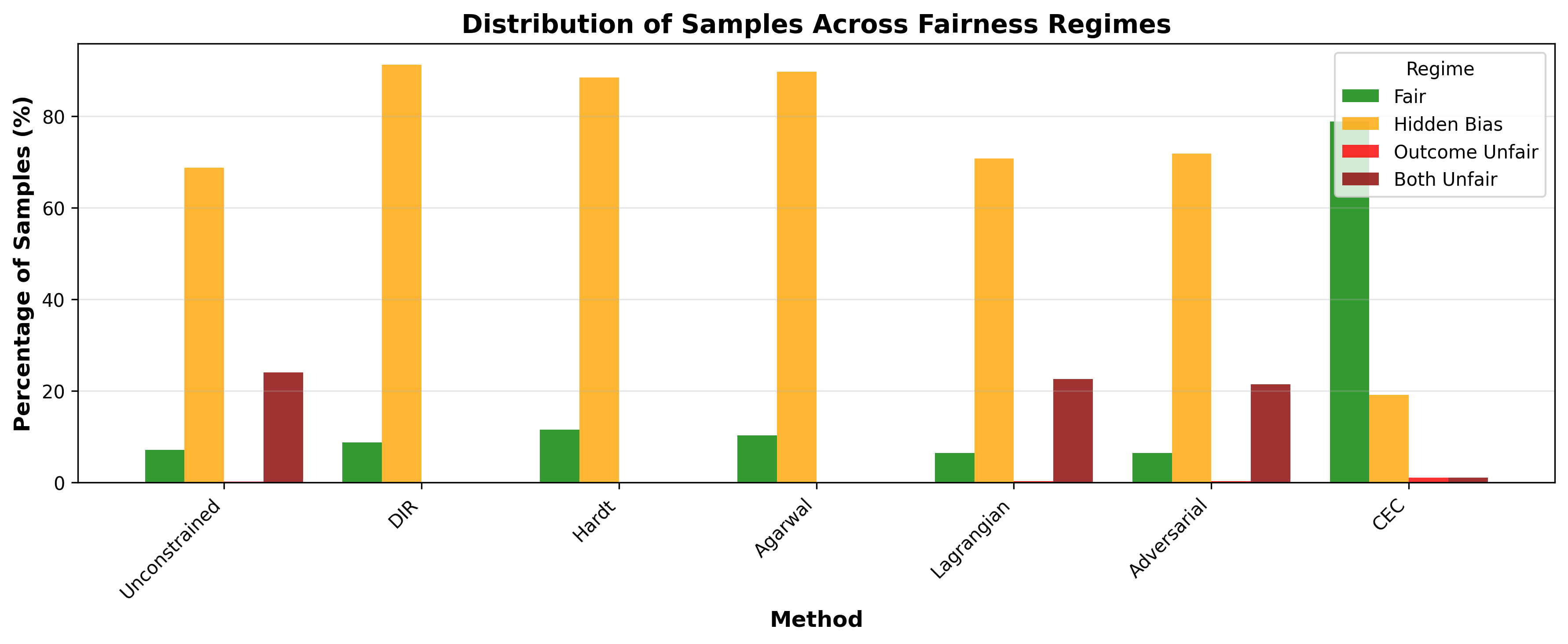}
\includegraphics[width=0.98\columnwidth]{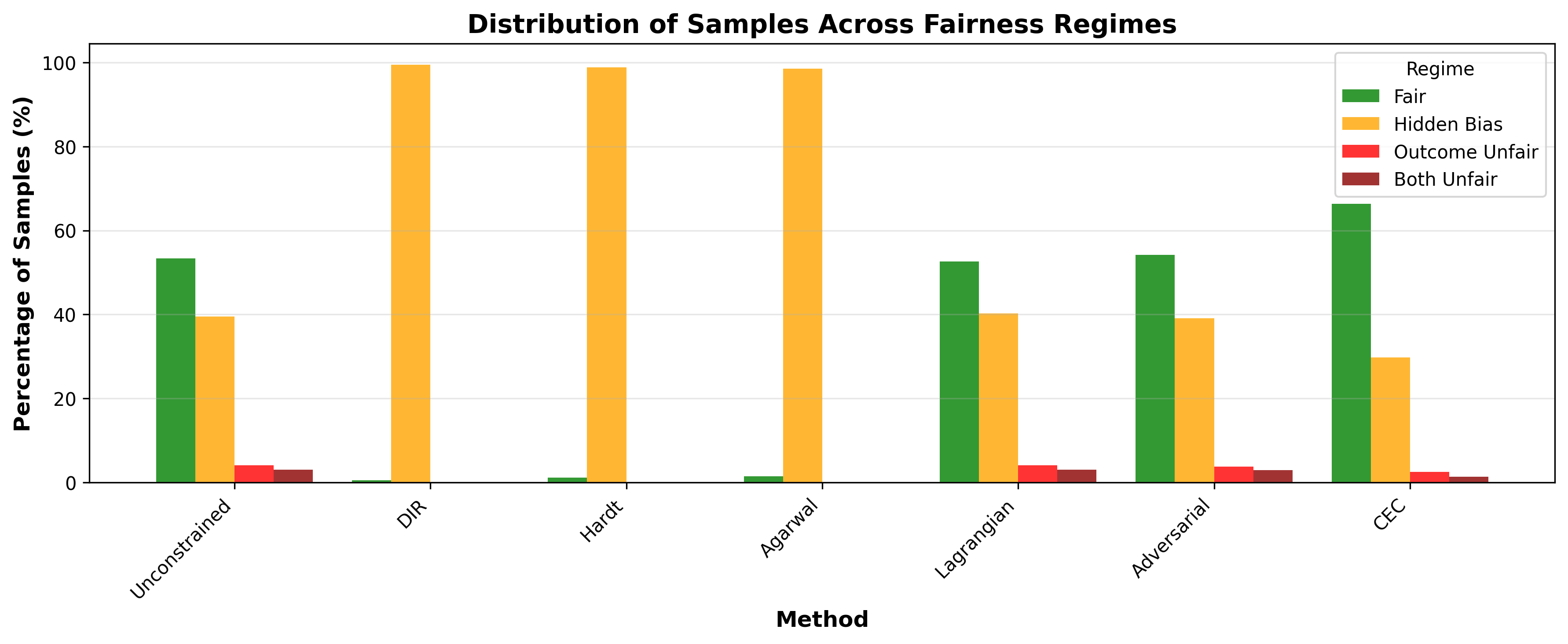}
\caption{Regime distribution for German Credit (left) and Synthetic (right). Each method shows the percentage of its sample that falls into four taxonomy categories.}
\label{fig:regimes_dist}
\end{figure*}

Figure~\ref{fig:regimes_dist} provides a complementary per-sample view by showing the fraction of test individuals assigned to each fairness regime. On German Credit, the dominance of Regime B is striking in DIR, Hardt, and Agarwal methods, with approximately 90\% of all individuals in the hidden bias regime, meaning that for nearly every applicant, the model produced the correct prediction but arrived at it through group-dependent reasoning. Even the Unconstrained model assigned around 69\% of samples to Regime B, with an additional 24\% in Regime D (both unfair). The CEC model assigned nearly 80\% of samples into Regime A (fully fair) while reducing Regime B to under 19\%. 

On Synthetic data, the pattern is similar but less extreme. Most baselines placed 39--99\% of samples in Regime B, depending on the method, while the CEC model reduced this to approximately 30\% and raised Regime A to 66\%. A notable observation is that outcome-focused methods like DIR, Hardt, and Agarwal actually \emph{worsened} the regime distribution compared to the Unconstrained baseline. This happened because these methods force predictions to agree across groups without constraining explanations by converting Regime D samples (which are mostly detectable by outcome metrics) into Regime B samples (which are not detectable by outcome metrics). 
This conversion effect is why outcome-oriented fairness metrics cannot mitigate procedural bias.

\subsection{Trade-Off Analysis}
\begin{table}[t]
\centering
\small
\caption{Pareto non-dominance across 5 folds $\times$ 4 datasets. 
We note that CEC model has a structural advantage over baselines due to its multi-objective nature.}
\label{tab:pareto}
\begin{tabular}{@{}lccccc@{}}
\toprule
\textbf{Method} & \textbf{German} & \textbf{Adult} & \textbf{Synthetic} & \textbf{HMDA} & \textbf{Total} \\
\midrule
Unconstrained & 1/5 & 5/5 & 2/5 & 5/5 & 13/20 \\
DIR & 0/5 & 0/5 & 0/5 & 0/5 & 0/20 \\
Hardt & 0/5 & 1/5 & 0/5 & 0/5 & 1/20 \\
Agarwal & 2/5 & 5/5 & 5/5 & 0/5 & 12/20 \\
Lagrangian & 1/5 & 4/5 & 2/5 & 4/5 & 11/20 \\
Adversarial & 2/5 & 3/5 & 2/5 & 5/5 & 12/20 \\
\textbf{\textsc{CEC (Ours)}} & 5/5 & 5/5 & 5/5 & 5/5 & \textbf{20/20} \\
\bottomrule
\end{tabular}
\end{table}

To assess whether procedural fairness comes at the cost of utility or outcome fairness, we compute Pareto non-dominance across all 5 folds $\times$ 4 datasets. A method is non-dominated on a given fold if no other method achieves strictly better F1, EO gap, and CEC score simultaneously. Table~\ref{tab:pareto} reports the results. The CEC model is the only method that produced a non-dominated solution on every fold of every dataset (20/20), meaning that no baseline ever jointly outperformed the CEC model across all three objectives. On the other hand, the next-best methods are Unconstrained (13/20), Agarwal (12/20), and Adversarial (12/20), which were frequently dominated because their strong utility came paired with poor CEC scores that CEC-model improved upon without sacrificing the other dimensions. DIR and Hardt were dominated on nearly every fold (0/20 and 1/20, respectively), reflecting their tendency to harm both utility and procedural fairness because they did not mitigate bias in modeling, which is the most crucial part. The result was especially notable on German Credit, where the CEC model was non-dominated on all 5 folds, while Unconstrained, Lagrangian, and Hardt were each non-dominated on at most 1 fold, which confirms that on datasets with substantial hidden procedural bias, the CEC model achieved trade-offs that no other method could match.

\section{Ablation Study}
\subsection{Hyperparameter Sensitivity}
\begin{figure*}[t]
    \centering
    \includegraphics[width=0.99\linewidth]{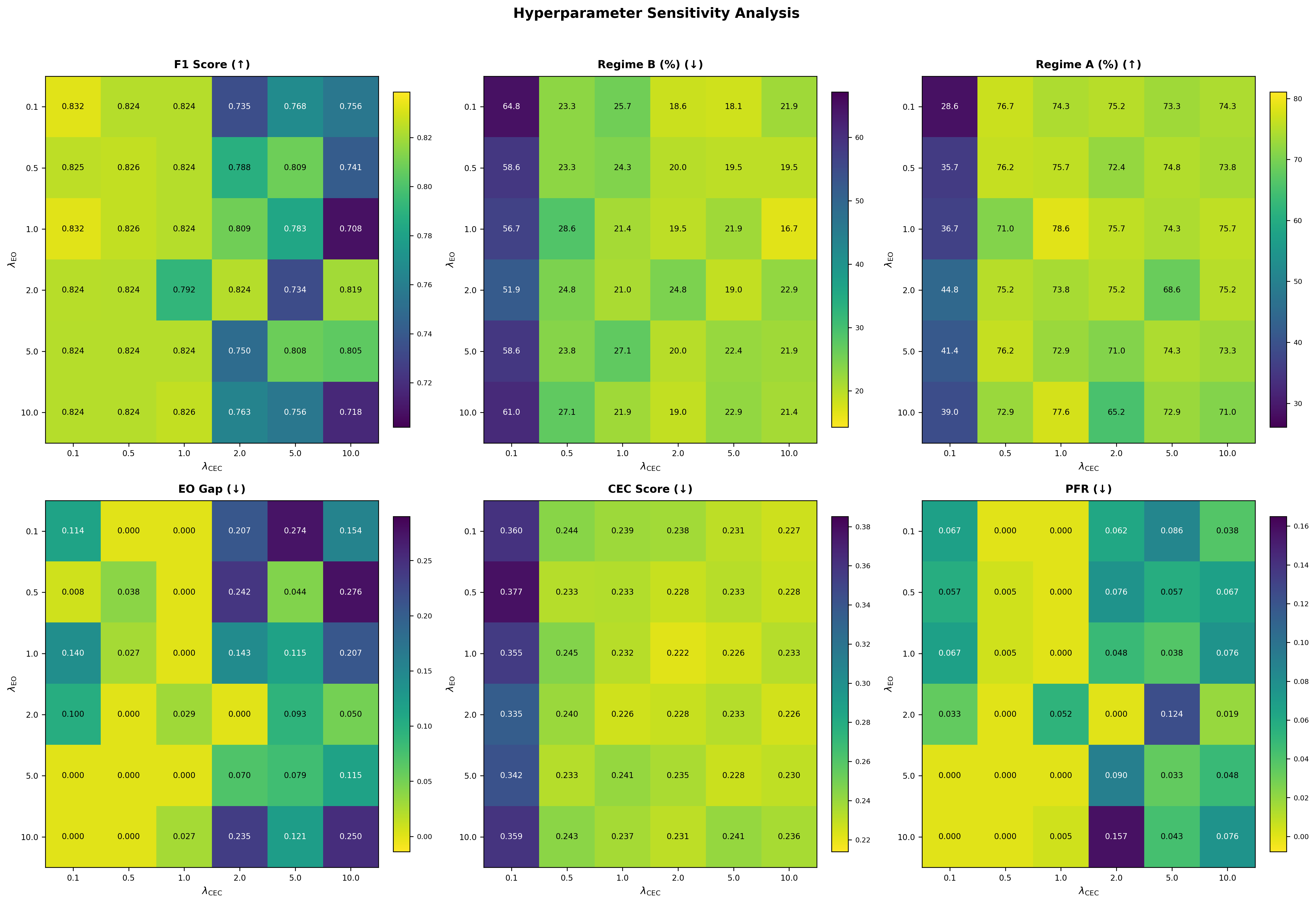}
    \caption{Sensitivity Plots of $\lambda_{EO}$ and $\lambda_{CEC}$ on German Dataset}
    \label{fig:heatmap}
\end{figure*}

Figure~\ref{fig:heatmap} presents a hyperparameter sensitivity analysis on German Credit, sweeping $\lambda_{\text{CEC}}$ and $\lambda_{\text{EO}}$ over $\{0.1, 0.5, 1.0, 2.0, 5.0, 10.0\}$ across six metrics. The heatmaps reveal that CEC responded primarily to $\lambda_{\text{CEC}}$: dropped sharply between $\lambda_{\text{CEC}} = 0.1$ and $\lambda_{\text{CEC}} = 0.5$ and plateaued thereafter, indicating that even modest procedural fairness pressure yielded substantial gains. Regime B and Regime A followed the same but alternating pattern, falling from over 50\% at $\lambda_{\text{CEC}} = 0.1$ to approximately 20\% for $\lambda_{\text{CEC}} \geq 1.0$, largely independent of $\lambda_{\text{EO}}$. F1 was remarkably stable across the moderate regularization region ($\lambda_{\text{CEC}} \leq 1.0$), remaining above 0.82 regardless of $\lambda_{\text{EO}}$, with meaningful degradation occurring only at $\lambda_{\text{CEC}} \geq 2.0$. The EO gap exhibited more complex behavior: it was reliably near zero in the low-to-moderate $\lambda_{\text{CEC}}$ region ($\leq 1.0$) but became erratic at higher values, where aggressive CEC regularization pushed the model toward near-constant predictions that destabilized group-wise rate estimates. PFR remained low throughout, confirming that prediction stability was preserved even under strong regularization. Taken together, the heatmaps identify a favorable operating region at $\lambda_{\text{CEC}} \in [0.5, 1.0]$ and $\lambda_{\text{EO}} \leq 2.0$ where procedural fairness improved substantially, outcome fairness was maintained, and utility cost was negligible. This sensitivity analysis confirms that our default configuration ($\lambda_{\text{CEC}} = 1.0$, $\lambda_{\text{EO}} = 1.0$) was well-situated within an optimal region.

\subsection{Effect of Composite Loss Function}
\begin{table*}[t]
    \centering
    \caption{Performance of different parts of the loss function}
    \label{tab:ablation}
    \begin{tabular}{llllll}
\toprule
Variant & F1 & EOD & CEC &  regime\_B \\
\midrule
Pred Only & 0.8205 & 0.2615 & 0.5784 &  70.4762 \\
Pred + EO & 0.8401 & 0.3460 & 0.5700 &  71.4286 \\
Pred + CEC & 0.8235 & 0.0000 & 0.2329 &  20.9524 \\
Full (Pred + EO + CEC) & 0.8352 & 0.0696 &  0.2360 & 23.3333 \\
\bottomrule
\end{tabular}
\end{table*}

To isolate the contribution of each loss component, Table~\ref{tab:ablation} reports results for four training variants on German Credit: prediction loss only, prediction with equalized odds, prediction with CEC, and the full objective. The most striking finding is that adding EO regularization alone (Pred + EO) reduced neither the CEC score nor the Regime B fraction compared to the prediction-only baseline, in fact, Regime B slightly \emph{increased} from 70.5\% to 71.4\%, confirming that outcome fairness constraints are orthogonal to procedural fairness and can even marginally exacerbate hidden bias. Conversely, adding CEC regularization alone (Pred + CEC) dramatically reduced the CEC score from 0.578 to 0.233 and Regime~B from 70.5\% to 21.0\%, while also driving the EO gap to zero as a side effect on German Credit data, suggesting that enforcing consistent reasoning across groups naturally promotes outcome equity even without explicit EO constraints. The full model (Pred + EO + CEC) achieved the best F1 while maintaining strong CEC and EO performance, with a slight increase in EO gap relative to the CEC-only variant as the three objectives negotiated their trade-off. These results demonstrate that CEC is the essential component for procedural fairness and that its benefits are largely complementary to, rather than redundant with, outcome fairness regularization.

\begin{figure}[t]
    \centering
    \includegraphics[width=0.99\linewidth]{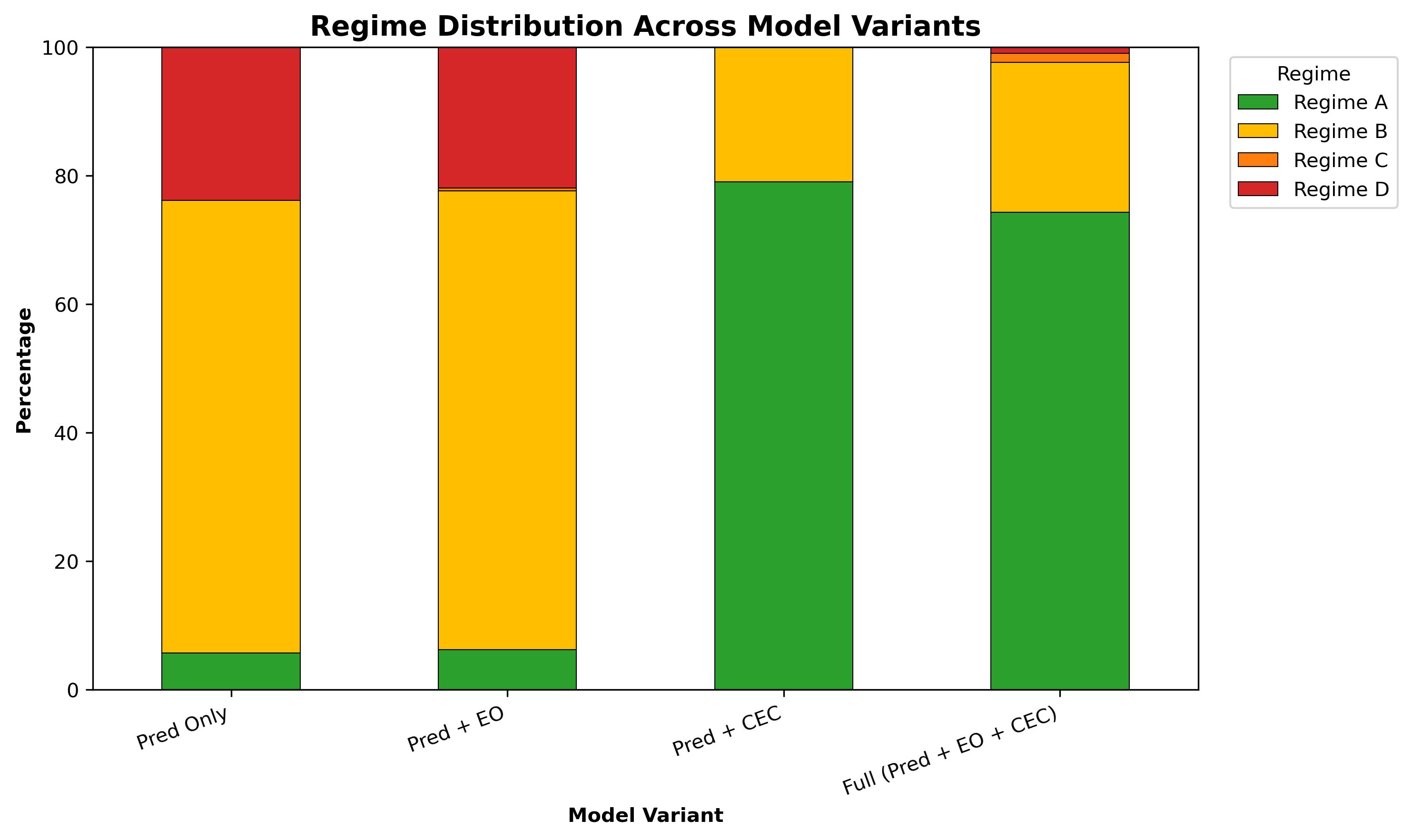}
    \caption{Regime distribution of each part of the loss function.}
    \label{fig:regime_transitions}
\end{figure}

Figure \ref{fig:regime_transitions} visualizes these transitions as stacked regime distributions. The figure shows that Pred + EO could not mitigate hidden procedural bias, but the contrast is seen in Pred + CEC, which  the dominant Regime B mass is shifted into Regime A. The full model retained nearly all of CEC's regime improvements while the EO component provided a small increase in Regime C and D samples.


\section{Discussion}
Our results provide clear answers to each research question and also expose an important blind spot in the current algorithmic fairness paradigm. \textbf{RQ1} asked whether outcome-fair models can harbor hidden procedural bias. The answer was shown to be yes across all four datasets; every baseline method that achieved competitive equalized odds scores simultaneously exhibited CEC scores comparable to or worse than the unconstrained model. The regime distribution analysis makes this explicit on German Credit, where DIR, Hardt, and Agarwal place nearly 90\% of individuals in Regime B, meaning that for the vast majority of applicants, the model produced an outcome fair prediction but arrived at it through group-dependent reasoning. The ablation study sharpens this finding further, as adding EO regularization alone did not reduce Regime B at all. In fact, it made unfairness more severe for those individuals. This demonstrates that outcome fairness and procedural fairness are orthogonal and optimizing for one does not address other.

\textbf{RQ2} and \textbf{RQ3} are best answered jointly. The CEC model was the only method that consistently achieved the lowest CEC scores and Regime B fractions across all datasets, while simultaneously delivering competitive equalized odds and statistical parity performance. The ablation shows that regularization alone drove the EO gap to zero as a side effect, suggesting that enforcing consistent reasoning across groups naturally promotes outcome fairness equity but the reverse does not hold. Existing fairness interventions, whether they operate through data transformation (DIR), threshold adjustment (Hardt), constrained optimization (Agarwal, Lagrangian), or adversarial training, all failed to reduce procedural bias because they optimize over predictions without attending to the model's internal explanation structure. The sensitivity analysis confirms that these gains are robust, as CEC and Regime B improve sharply at moderate $\lambda_{\text{CEC}}$ values while F1 remains above 0.82, and the favorable operating region is broad rather than a narrow sweet spot.

\textbf{RQ4} asked whether CEC generalizes across data settings of varying complexity and realism. The progression from synthetic data (controlled bias, known ground truth) through benchmark datasets (German Credit from 1990s Germany, Adult Income from U.S. census data) to real-world mortgage lending (HMDA under active ECOA oversight) demonstrates that CEC model effectiveness is not an artifact of any single domain. Notably, the magnitude of improvement varies in ways that are consistent with the structural properties of each dataset: German Credit and Adult Income, where historical patterns of gender and racial segregation are deeply embedded in feature distributions, showed the largest CEC reductions, while HMDA, where modern regulatory pressure has already reduced the most overt forms of procedural bias, showed a smaller but still meaningful improvement. These findings suggest that procedural bias is a pervasive feature of real lending systems and it persists even under regulatory oversight, and that only becomes visible when the right diagnostic tools are applied.

Our findings carry direct consequences for fair lending compliance. Under ECOA and Regulation B, lenders are prohibited from applying different underwriting standards to different demographic groups, and this prohibition maps precisely onto Regime B in our taxonomy. Current model risk management practices, including those outlined in the US Federal Reserve's SR 11-7 guidance, evaluate models primarily through outcome-based disparate impact testing. Our results demonstrate that this approach might be insufficient because a model can pass every standard disparate impact test while systematically applying different reasoning to applicants from different groups. This gap is particularly concerning as financial institutions increasingly adopt complex neural network models that are harder to audit through traditional means. CEC offers a practical tool to close this gap. At the model development stage, CEC regularization can be incorporated into training pipelines to prevent procedural bias from arising in the first place. At the audit stage, the CEC score and regime distribution provide quantitative evidence of whether a model's reasoning is consistent across groups. The per-individual nature of CEC is especially relevant because, rather than reporting aggregate group statistics, it can identify specific applicants who were subjected to inconsistent reasoning, enabling targeted remediation. As regulators worldwide move toward requiring explainability in automated credit decisions, frameworks that audit, not just what models decide but how they reason, will become increasingly essential.

\section{Conclusion}

We introduced Counterfactual Explanation Consistency (CEC), a framework for detecting and mitigating hidden procedural bias in automated lending models. Our work is motivated by a fundamental gap in the algorithmic fairness literature, where existing methods ensure that models produce equitable outcomes across demographic groups but do not examine whether models arrive at those outcomes through consistent reasoning. We formalized this gap through a four-regime taxonomy and showed, both theoretically and empirically, that outcome fairness and procedural fairness are orthogonal and optimizing for one can actively undermine the other.

CEC addresses this gap by proposing nearest-neighbor counterfactual generation that produces realistic matched pairs without causal assumptions, a consistent baseline principle that isolates discriminatory reasoning from reference-point artifacts in integrated gradient comparisons, and a differentiable training loss that encourages explanation consistency alongside accuracy and outcome fairness. Our experiments across synthetic data, German Credit, Adult Income, and HMDA mortgage data demonstrated that all six baseline fairness methods leave the majority of individuals in Regime B (same prediction, different reasoning), while CEC shifts the dominant regime to Regime A (fully fair) with minimal utility cost. The ablation study reveals that CEC regularization alone drove the equalized odds gap to near zero as a side effect, while the reverse did not hold, suggesting that procedural fairness may be the more fundamental objective from which outcome fairness follows naturally.

\section{Limitations and Future Work}

\textbf{Limitations.} Our framework has several limitations that warrant acknowledgment. First, CEC relies on integrated gradients as the attribution method, which assumes that the straight-line path from baseline to input traverses a meaningful region of the model's decision space. Alternative attribution methods, such as SHAP or attention-based explanations, may yield different consistency assessments. However, we leave the comparative study of attribution methods within the CEC framework to future work. Second, the financial feature set $\mathcal{F}$ must be specified by domain experts, and different choices of $\mathcal{F}$ may lead to different counterfactual matches and CEC scores. While this is inherent to any fairness method that distinguishes legitimate from illegitimate features, it introduces a degree of subjectivity that practitioners must navigate. Third, our experiments use binary protected attributes; extending CEC to multi-valued or intersectional attributes would require addressing combinatorial growth in the number of counterfactual comparisons and label-group baselines. Fourth, the computational overhead of integrated gradients (approximately $3\times$ to $5\times$ standard training time) may limit applicability to very large-scale models, though this cost is incurred only during training and not at inference time.

\textbf{Future Work.} Several directions merit further investigation. CEC scores can be decomposed at the feature level to identify which specific features exhibit the largest attribution disparities across groups, enabling targeted model debugging and more interpretable fairness audits. The relationship between CEC and causal notions of fairness deserves formal analysis: while our nearest-neighbor matching avoids causal assumptions, understanding when and whether CEC scores align with causal path-specific effects would strengthen the theoretical foundations. Extending the framework to continuous or multi-valued protected attributes, regression settings, and non-tabular domains such as text or image-based lending decisions would also broaden its applicability.

\bibliography{aaai25}

\end{document}